\documentclass[twoside]{article}

\usepackage{amsmath}
\usepackage{amssymb}
\usepackage{chngcntr}
\usepackage{amsthm}
\usepackage{mathtools}
\usepackage{pifont}
\usepackage{dsfont}
\usepackage{bm}
\usepackage{mathrsfs}
\usepackage{eurosym}
\usepackage{tabularx}
\usepackage{booktabs}
\usepackage{cooltooltips}
\usepackage{svg}
\usepackage{colortbl}
\usepackage{color}
\usepackage{tikz}
\usepackage{subfigure}
\usepackage{wrapfig}
\usepackage{comment}
\usepackage{nicefrac}
\usepackage{subcaption}
\usepackage[font=small]{caption}

\usepackage[utf8]{inputenc} 
\usepackage[T1]{fontenc}    
\usepackage{hyperref}       
\usepackage{url}            
\usepackage{booktabs}       
\usepackage{amsfonts}       
\usepackage{nicefrac}       
\usepackage{microtype}      
\usepackage{xcolor}         
\usepackage{algorithm}
\usepackage{algorithmic}
\usepackage[capitalise]{cleveref}
\usepackage{enumitem}

\newcommand{\R}{\mathbb{R}}
\newcommand{\supp}{\operatorname{supp}}
\newcommand{\ball}{B}
\newcommand{\dimH}{\dim_{\mathrm H}}

\newcommand{\Hd}{\mathcal H}

\newtheorem{theorem}{Theorem}
\newtheorem{lemma}{Lemma}
\newtheorem{proposition}{Proposition}

\newtheorem{definition}{Definition}

\newtheorem{remark}{Remark}

\definecolor{mydarkblue}{rgb}{0,0.08,0.45}
\hypersetup{
  colorlinks=true,
  linkcolor=mydarkblue,
  citecolor=mydarkblue,
  urlcolor=mydarkblue
}

\usepackage[accepted]{aistats2026}
\usepackage[round]{natbib}
\begin{document}

%

%

\twocolumn[

\aistatstitle{Rethinking Intrinsic Dimension Estimation in Neural Representations}

\aistatsauthor{ Rickmer Schulte \And David R\"ugamer }

\aistatsaddress{ LMU Munich, MCML \And LMU Munich, MCML } ]

\begin{abstract}
  The analysis of neural representation has become an integral part of research aiming to better understand the inner workings of neural networks. While there are many different approaches to investigate neural representations, an important line of research has focused on doing so through the lens of intrinsic dimensions (IDs). Although this perspective has provided valuable insights and stimulated substantial follow-up research, important limitations of this approach have remained largely unaddressed. In this paper, we highlight a crucial discrepancy between theory and practice of IDs in neural representations, theoretically and empirically showing that common ID estimators are, in fact, not tracking the true underlying ID of the representation. We contrast this negative result with an investigation of the underlying factors that may drive commonly reported ID-related results on neural representation in the literature. Building on these insights, we offer a new perspective on ID estimation in neural representations.
\end{abstract}

\section{INTRODUCTION}
\label{sec:intro}

Intrinsic dimensions (IDs) play a central role in deep learning and have been the focus of research across a broad range of related studies. IDs are often encountered in the context of the so-called \textit{manifold hypothesis} \citep{tenenbaum2000global, fefferman2016testing}. The hypothesis postulates that many high-dimensional datasets frequently encountered in deep learning, such as image and text data, lie on or near a low-dimensional manifold despite being embedded in high-dimensional ambient spaces of dimension $d$, 
e.g., the number of pixels of an image. The hypothesis implies that a small number of dimensions $d_{\mathcal{M}}\ll d$ would theoretically suffice to fully characterize such datasets. This manifold dimension $d_{\mathcal{M}}$ is commonly referred to as the \textit{intrinsic dimension}.

The manifold hypothesis has been explored both empirically and theoretically in numerous studies. Although the validity of the hypothesis remains debated, many researchers attribute at least part of the success of deep learning to this phenomenon. In other words, the fact that deep learning models are able to \textit{learn} in the context of high-dimensional image and text data, and thereby escape the so-called \textit{curse of dimensionality} \citep{bellman1961, bishop2006pattern, bengio2013representation, goodfellow2016deep}, is said to be enabled by the presence of low IDs of the data that neural networks can adapt to \citep{chen2019efficient, schmidt2019deep, nakada2020adaptive, kohlerlanger2023, schulte2025adjustment}.

\paragraph{Related Literature} In recent years, there has been a rise in research investigating deep neural networks through the lens of IDs. Besides investigating IDs of frequently encountered datasets in deep learning \citep{pope2021intrinsic, konz2024intrinsic}, researchers have also aimed to understand the inner workings of these models by examining IDs of neural representations from different layers of the neural network and found consistently occurring patterns over various models \citep{gong2019intrinsic, ansuini2019intrinsic, cai2021isotropy, valeriani2023geometry, konz2024preprocessing, doimo2024representation, aljaafari2025trace, viswanathan2025geometry, cheng2025emergence}.

\begin{figure}
    \centering
    \includegraphics[width=0.98\columnwidth]{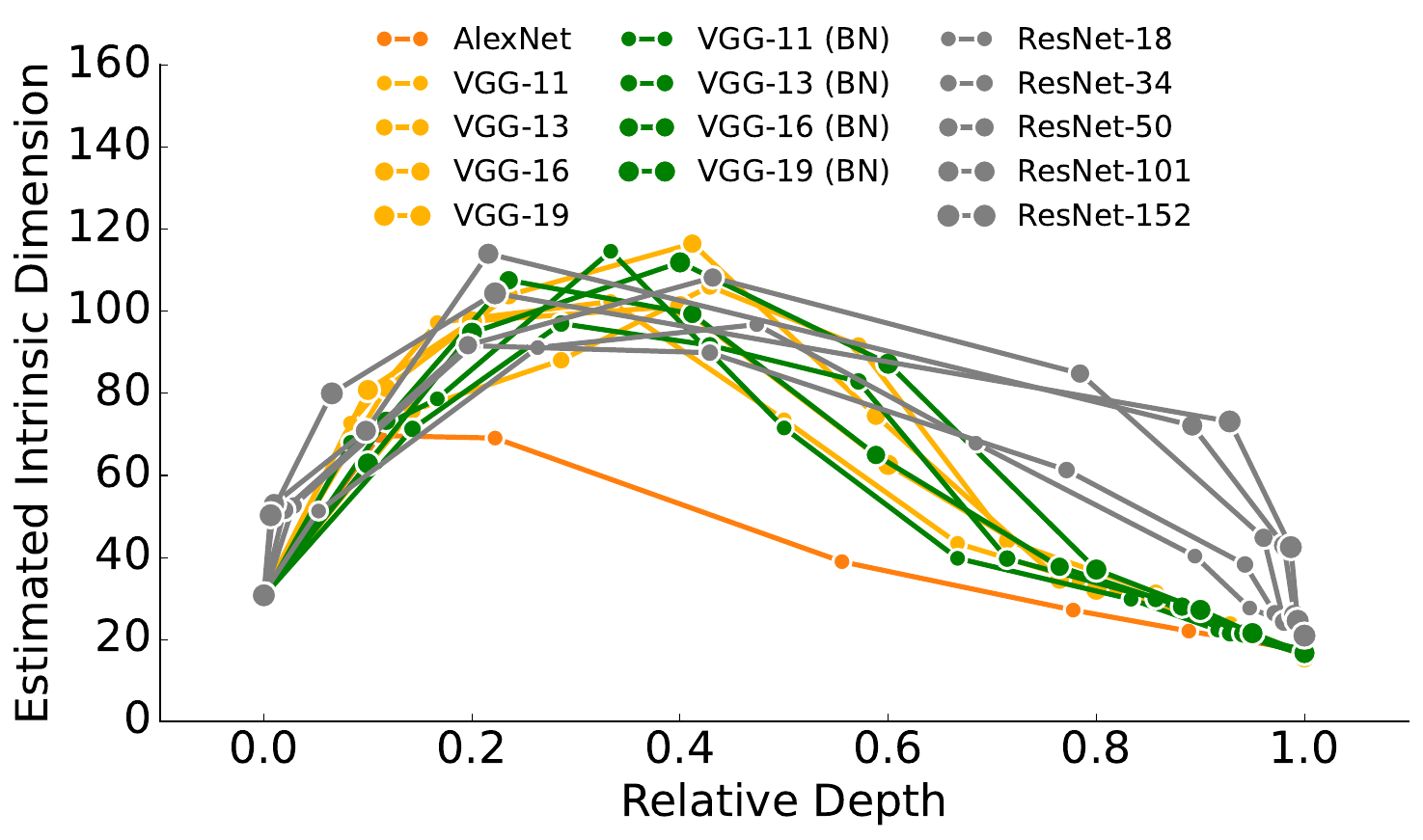}
    \caption{Layer-wise ID patterns for various architectures \citep[adapted from][]{ansuini2019intrinsic}. ID estimates are based on the TwoNN estimator. To compare models with varying number of layers, results are depicted over relative depth.}
    \label{fig:id_fig}
\end{figure}

\paragraph{Problem Statement} Investigating IDs of neural representations on various datasets and neural network architectures, ranging from vision to text-based models, all previous studies found ID estimates to vary over different network layers. Most strikingly, almost all of these studies find estimated IDs to increase in the early layers and to decrease in later layers (cf.~\cref{fig:id_fig}). Such patterns are then often interpreted as the emergence of abstractions or phase transitions \citep{cheng2025emergence}. 

While we do not question these previous empirical results, the question remains whether the observed phenomena can truly be attributed to the ID.

\paragraph{Our Contributions} 
In this work, we investigate this research question and provide the following contributions:

\begin{enumerate}[leftmargin=1.4em]
    \item We show that commonly used ID estimators are not only heavily biased in high dimensions (\cref{sec:bias_high_dim}), but are provably not tracking the true underlying IDs of layer-wise neural representations (\cref{sec:est_true_ids}).
    \item We further show that this result also holds in case the data does not lie on a single but a \textit{union of manifolds} (\cref{sec:union_manifolds}).
    \item Following this, we give a precise characterization of manifolds of LLM embeddings and hidden layer representations (\cref{sec:id_llms}). 
    \item Finally, we uncover driving forces behind layer-wise ID patterns and their connection to other metrics.
\end{enumerate}

\section{INTRINSIC DIMENSIONS \& ESTIMATORS}
\label{sec:main}

As described in the previous section, the core idea of the manifold hypothesis is that many deep learning datasets, represented in high-dimensional ambient spaces of dimension $d$, may lie on or close to a low-dimensional manifold with ID $d_{\mathcal{M}}\ll d$. 

Before diving into the analysis of IDs, we require a formal definition of IDs. While many different definitions are used in related literature, we will work with two of the most common IDs, the \emph{Hausdorff dimension} and the \emph{pointwise dimension}, and discuss corresponding estimators in the following. We provide a brief definition below, but refer the interested reader to the excellent surveys of \citet{camastra2016intrinsic} and \citet{binnie2025survey} that discuss several details, including other notions of IDs and estimators.

\paragraph{Intrinsic Dimensions} A common definition of ID is the \textit{Hausdorff dimension} \citep{hausdorff1918dimension}, which generalizes the concept of dimension to arbitrary sets in metric spaces. A formal definition can be found in \cref{app:def_haus_dim}. A very related notion of ID is the so-called \textit{pointwise dimension} \citep{young1982dimension}. Given that it is locally defined for each point (instead of all points as in the Hausdorff dimension), it is also known as the \textit{local Hausdorff dimension}. In case the lower and upper pointwise dimensions (formally defined in \cref{app:def_point_dim}) agree, the pointwise dimension defined at each point $x$ can be written as
\begin{equation}
    d_\mu(x)=\lim_{r\downarrow 0}\frac{\log \mu(\ball(x,r))}{\log r}\, ,
\end{equation}
where $\ball(x,r)$ corresponds to a ball with radius $r$ that is centered around the point $x$ and $\mu$ is a probability measure, such as Borel, or a measure supported on the set under study. In case $d_\mu(x)$ is $\mu$-a.s.\ the same for all points $x$, the pointwise dimension is said to be \textit{exact dimensional} and abbreviated by $d_\mu$. Further details can be found in \cref{app:def_point_dim}.

\paragraph{ID Estimators and its Targets} While there are generally many different ID estimators, we will focus on the ones that are most commonly applied in the analysis of neural representations, namely the Maximum Likelihood Estimator (MLE) \citep{levina2004maximum}, TwoNN \citep{facco2017estimating}, and variants thereof. 
Both ID estimators build on the ratios of nearest neighbor (NN) distances. For example, the MLE at point $x$ that considers $k$ neighbors, is defined as
\begin{equation}\label{eq:mle_est}
    \widehat d_{\mathrm{MLE}}(x)
\ =\ \left[\frac{1}{k-1}\sum_{i=1}^{k-1}\log\frac{T_k(x)}{T_i(x)}\right]^{-1},
\end{equation}
where $T_i(x)$ denotes the distance from $x$ to its $i$-th nearest neighbor. The final MLE estimate is obtained by averaging over the estimates of all points $x$, usually using the approach described in \citet{kay2005correction}. The TwoNN estimator is a special case of the MLE, using only the distances to the first two $k=2$ nearest neighbors (NNs). 
Although these estimators may differ in finite samples, it is fundamental for our analysis that both target the same underlying notion of ID, namely, the pointwise dimension. A formal derivation of this connection can be found in \cref{app:mle_2nn_targets}. 

\paragraph{Bias of ID Estimators in High Dimensions}\label{sec:bias_high_dim}

\begin{figure}
    \centering
    \includegraphics[width=0.98\columnwidth]{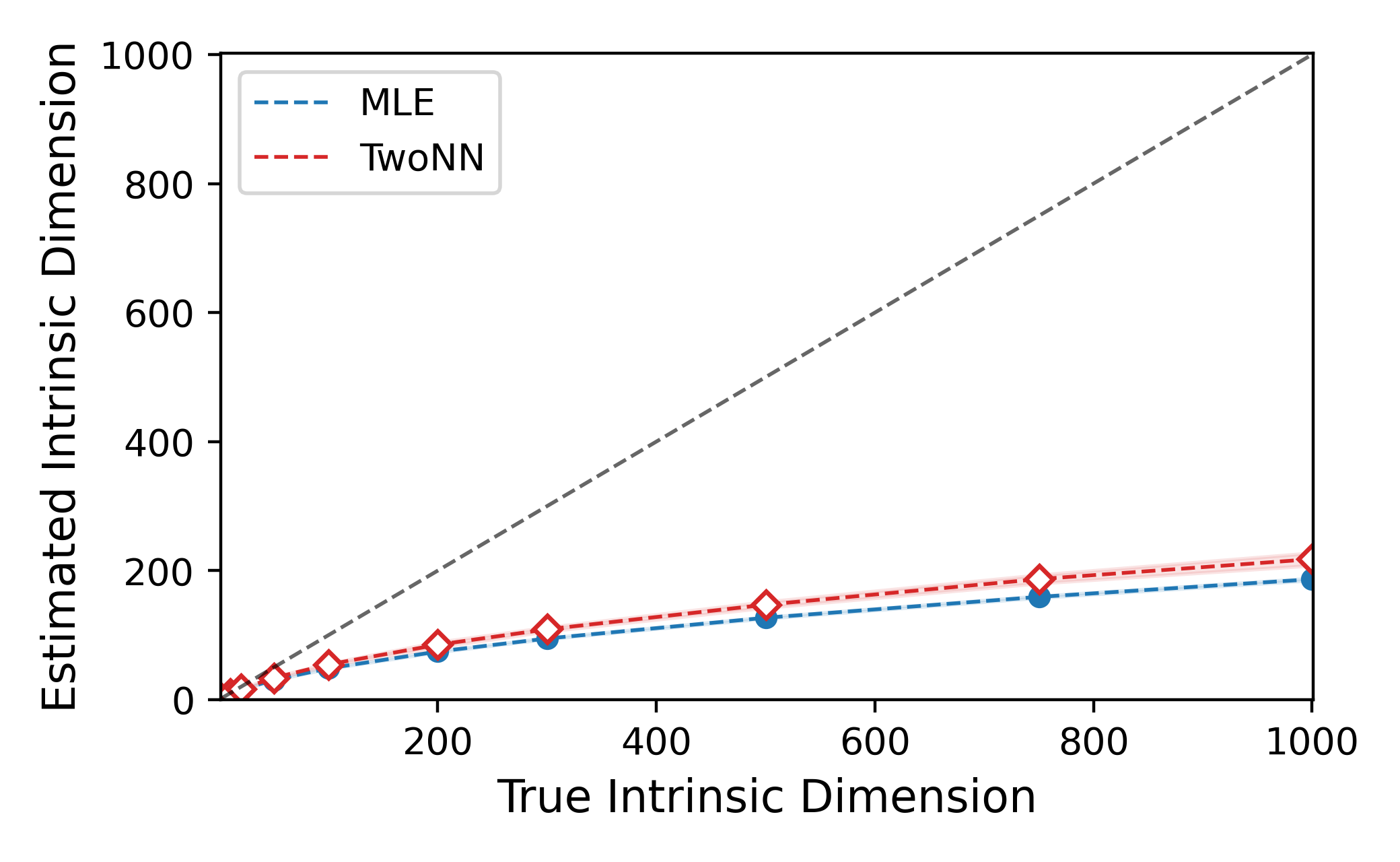}
    \caption{Estimated IDs using TwoNN and MLE vs.\ true ID of the manifold. Details are provided in \cref{app:add_exp}.}
    \label{fig:id_fig2}
\end{figure}

Similar to other ID estimators, the MLE and TwoNN estimators are known to be sensitive to their underlying assumptions and to underestimate the true ID in high dimensions. 
Although the phenomenon of underestimation is well recognized in the literature \citep{camastra2016intrinsic, binnie2025survey}, it is usually demonstrated for relatively small numbers of dimensions, revealing relatively small biases compared to ID estimates \citep{levina2004maximum}. However, we show that this bias grows drastically with increasing true ID (cf.~\cref{fig:id_fig2}). 
This is particularly concerning given that dimensions as those shown in \cref{fig:id_fig2} are highly relevant in modern deep learning. For example, the latest DINO embeddings increased from 1,535 in version 2 to 4,096 in version 3 \citep{simeoni2025dinov3}.

Given this underestimation, related papers usually state that ID estimates should be treated as a \textit{lower bound} of the actual ID \citep{ansuini2019intrinsic}. However, in order to provide meaningful insights into ID patterns of layer-wise representations of neural networks, the bias of such lower bound estimators must at least be consistent. If this is the case, \textit{relative comparisons} of estimates, e.g., by looking at the estimator's \emph{patterns}, would be rendered meaningful. However, as we will show in the next section, these estimated ID patterns also fail to track a (biased) version of the underlying ID and are therefore not at all indicative of it.

\section{INTRINSIC DIMENSIONS OF NEURAL REPRESENTATIONS
}

In the following, we show that ID estimators do not estimate the true IDs of neural representations and can also not be considered indicative of those. 
We derive this result for the \textit{pointwise dimension}. Analogous results for the \textit{Hausdorff dimension} are deferred to the Appendix. Before stating our results, we briefly introduce some relevant notation.

\paragraph{Notation}
Unless otherwise stated, the ambient spaces considered are Euclidean $(\R^d,\|\cdot\|)$. For \mbox{$f:\R^n\to\R^m$}, we say $f$ is \mbox{$L$-\emph{Lipschitz}} if $\|f(x)-f(y)\|\le L\|x-y\|$ for all $x,y \in \R^n$, and \emph{$(L,\alpha)$-Hölder}, $\alpha\in(0,1]$, if \mbox{$\|f(x)-f(y)\|\le L\|x-y\|^\alpha$}.
We denote the Hausdorff dimension by $\dimH$. 
For a probability measure $\mu$ on $\R^n$, the pushforward by $f$ is $\nu=f_\#\mu$, i.e.\ $\nu(\cdot)=\mu(f^{-1}(\cdot))$. For a feedforward neural network $(f_\ell)_{\ell=1}^L$ and an input law $\mu_0$, define $\mu_\ell:=(f_\ell)_\#\mu_{\ell-1}$.

\subsection{Intrinsic Dimensions of Layer-wise Neural Representations are Non-increasing}\label{sec:est_true_ids}

Our main finding rests on the observation that almost all neural network architectures are a composition of layer-wise Lipschitz mappings, 
and that common notions of IDs cannot increase under Lipschitz mappings.

\paragraph{Neural Networks are Lipschitz Mappings}
The observation that most neural networks are Lipschitz mappings follows from the fact that they are usually a composition of the following Lipschitz components:
\begin{itemize}[leftmargin=1.4em]
    \item Standard linear or convolutional layers \citep[e.g., proven in][Cor.~2.1]{kim2021lipschitz};
    \item Pointwise activations such as ReLU \citep[see, e.g.,][]{tsuzuku2018lipschitz} and Softmax \citep[see, e.g.,][Prop.~4]{gao2017properties};
    \item Pooling operators and residual additions \citep[derived in, e.g.,][]{tsuzuku2018lipschitz, bethune2022pay};
    \item Normalization layers such as BatchNorm and 
    \mbox{RMSNorm} \citep[e.g.,][]{tsuzuku2018lipschitz}. 
\end{itemize}
The conclusion that neural networks are Lipschitz mappings then just follows from the fact that compositions of Lipschitz mappings remain Lipschitz. We discuss the above results as well as the special case of self-attention in more detail in \cref{app:NN_Lipschitz}.

Using this insight, we now show in the following lemma that the pointwise dimension \textbf{cannot} increase under Lipschitz mappings.
\begin{lemma}[Pointwise dimensions under Lipschitz mappings]\label{lem:pointwise-holder}
Let \mbox{$f:\R^n\to\R^m$} be $L$\textup{-}Lipschitz, $\mu$ a Borel probability measure on $\R^n$, and $\nu=f_\#\mu$.
For $\mu$-a.e.\ $x$ with $y=f(x)$, we get for the upper $(\overline d_{\nu})$ and lower $(\underline d_{\nu})$ pointwise dimension that
$$\overline d_\nu(y)\le \overline d_\mu(x),
\qquad
\underline d_\nu(y)\le \underline d_\mu(x).$$
In particular, if the pointwise dimension $d_\nu$ exists \mbox{($\overline d_{\nu}=\underline d_{\nu}$)}, then 
$$d_\nu(y)\le d_\mu(x).$$
\end{lemma}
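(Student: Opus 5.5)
The argument rests on a single ball inclusion. Fix $x\in\supp\mu$ and put $y=f(x)$. Because $f$ is $L$-Lipschitz, every $z\in\ball(x,r)$ satisfies $\|f(z)-y\|\le L\|z-x\|<Lr$. Hence $\ball(x,r)\subseteq f^{-1}(\ball(y,Lr))$, and by the definition of the pushforward
$$\nu(\ball(y,Lr))=\mu\bigl(f^{-1}(\ball(y,Lr))\bigr)\ge \mu(\ball(x,r)).$$
This inclusion holds for every $x$, so the exceptional $\mu$-null set in the statement is only needed to guarantee that the relevant quantities are finite. It suffices to restrict to $x\in\supp\mu$. There $\mu(\ball(x,r))>0$ for all $r>0$, and consequently $\nu(\ball(y,s))>0$ for all $s>0$, so all logarithms are finite. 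The complement of $\supp\mu$ is $\mu$-null.

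Next we convert the measure inequality into one for the dimension quotients. For $0<r<1/L$ (and $r<1$) we have $\log r<0$ and $\log(Lr)<0$. Taking logarithms in the inequality above and dividing by the negative number $\log(Lr)$ reverses it:
$$\frac{\log\nu(\ball(y,Lr))}{\log(Lr)}\le \frac{\log\mu(\ball(x,r))}{\log(Lr)}=\frac{\log\mu(\ball(x,r))}{\log r}\cdot\frac{\log r}{\log r+\log L}.$$
The last factor tends to $1$ as $r\downarrow 0$.

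We then pass to the limits. Substituting $s=Lr$, the map $r\mapsto Lr$ is a bijection of a neighbourhood of $0^+$. Therefore
$$\limsup_{s\downarrow0}\frac{\log\nu(\ball(y,s))}{\log s}=\limsup_{r\downarrow0}\frac{\log\nu(\ball(y,Lr))}{\log(Lr)},$$
and the same identity holds for $\liminf$. Applying $\limsup$ to both sides of the inequality, and using that the right-hand side is a product of two terms, one of which converges to $1$, gives $\overline d_\nu(y)\le\overline d_\mu(x)$. Applying $\liminf$ in the same way gives $\underline d_\nu(y)\le\underline d_\mu(x)$. If $\overline d_\mu(x)=+\infty$ there is nothing to prove, and the lower bound $0$ on the quotients causes no sign issues. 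When $\overline d_\nu=\underline d_\nu$, both inequalities combine to $d_\nu(y)\le \underline d_\mu(x)\le\overline d_\mu(x)$. In particular $d_\nu(y)\le d_\mu(x)$ whenever the latter exists.

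The main obstacle is bookkeeping rather than any deep step. We must handle the sign flip when dividing by negative logarithms. We must justify that $\limsup$ of a product with a factor tending to $1$ equals the $\limsup$ of the other factor, which relies on finiteness or nonnegativity of the quotients. We must also restrict to $\supp\mu$ so that no $\log 0$ appears. It is worth noting that the result relies only on the one-sided inclusion; no injectivity or inverse bound on $f$ is needed.
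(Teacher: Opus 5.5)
Your proposal is correct and takes essentially the same approach as the paper: the inclusion $f(B(x,r))\subseteq B(y,Lr)$ gives the pushforward inequality, you divide by the negative logarithm, and the $\log L$ shift vanishes in the limit (the paper runs this for general $(L,\alpha)$-H\"older maps and obtains a factor $1/\alpha$). Your extra bookkeeping fills in details the paper leaves implicit. This includes restricting to $\supp\mu$ to avoid $\log 0$, the reparametrization $s=Lr$, and using nonnegativity of the quotients to pass the $\limsup$/$\liminf$ through the factor tending to $1$.
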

The corresponding proof can be found in \cref{app:proofs}.
A related result to Lemma \ref{lem:pointwise-holder}, studying the pointwise dimension under linear maps, can be found in {\citet[Lemma~4.5]{hochman2014lectures}}. Lemma \ref{lem:pointwise-holder} is more general in the sense that any linear map is Lipschitz, but not vice versa. In the following, we will omit a separate treatment of $\overline d_{\nu}$ and $\underline d_{\nu}$, assuming that $d_\mu$ exists.

Using the previous lemma, we can now show our first main result, namely the layer-wise monotonicity of the pointwise dimension. 

\begin{theorem}[Layer-wise monotonicity of pointwise dimensions]\label{thm:mono-pointwise} Let $f_1,\ldots,f_L$ be Lipschitz maps and set $\mu_\ell=(f_\ell)_\#\mu_{\ell-1}$.
Then for each $\ell \in \{1, \ldots, L\}$ and $\mu_{\ell-1}$-a.e.\ $x$ with $y=f_\ell(x)$, we have that
\[
d_{\mu_\ell}(y)\ \le\ d_{\mu_{\ell-1}}(x).
\]
\end{theorem}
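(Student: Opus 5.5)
The plan is to apply \cref{lem:pointwise-holder} one layer at a time, with $f=f_\ell$, $\mu=\mu_{\ell-1}$ and $\nu=\mu_\ell=(f_\ell)_\#\mu_{\ell-1}$. The theorem is a per-layer statement: for each fixed $\ell$ we compare $\mu_\ell$ with $\mu_{\ell-1}$. So no induction over depth is needed to obtain the displayed inequality. We only need the hypotheses of the lemma to hold at layer $\ell$.

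First we check these hypotheses. The input law $\mu_0$ is a Borel probability measure on the input space. Each $f_\ell$ is Lipschitz, hence continuous and Borel measurable. Therefore the pushforward of a Borel probability measure by $f_\ell$ is again a Borel probability measure. A trivial induction on $\ell$ then shows that every $\mu_{\ell-1}$ is a Borel probability measure on the domain $\R^{n_{\ell-1}}$ of $f_\ell$. So the lemma applies at every layer, with $L$ replaced by the Lipschitz constant of $f_\ell$.

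Next we invoke the lemma. It gives a $\mu_{\ell-1}$-null set $N_\ell$ such that for all $x\notin N_\ell$, with $y=f_\ell(x)$,
\[
\overline d_{\mu_\ell}(y)\le \overline d_{\mu_{\ell-1}}(x), \qquad \underline d_{\mu_\ell}(y)\le \underline d_{\mu_{\ell-1}}(x).
\]
Under the standing convention stated after \cref{lem:pointwise-holder}, the pointwise dimensions exist, meaning upper and lower dimensions agree. On the relevant full-measure sets this gives $d_{\mu_\ell}(y)\le d_{\mu_{\ell-1}}(x)$.

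The one point needing care is the meaning of ``$d_{\mu_\ell}$ exists''. Existence of $d_{\mu_\ell}$ is only meaningful $\mu_\ell$-almost everywhere. We must therefore make sure the exceptional set is $\mu_{\ell-1}$-null when pulled back to the $x$-side. If $E_\ell$ is the $\mu_\ell$-null set where $d_{\mu_\ell}$ fails to exist, then by the definition of pushforward
\[
\mu_{\ell-1}\bigl(f_\ell^{-1}(E_\ell)\bigr)=\mu_\ell(E_\ell)=0 .
\]
Similarly, let $E_{\ell-1}$ be the $\mu_{\ell-1}$-null set where $d_{\mu_{\ell-1}}$ fails to exist. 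Then the union $N_\ell\cup f_\ell^{-1}(E_\ell)\cup E_{\ell-1}$ is $\mu_{\ell-1}$-null, and off this set the inequality holds. This pullback step is the only real obstacle; everything else is a direct invocation of the lemma.

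As a remark, chaining these a.e.\ statements along a $\mu_0$-typical input, again using that pullbacks of null sets are null, gives $d_{\mu_\ell}(F_\ell(x_0))\le d_{\mu_0}(x_0)$, where $F_\ell=f_\ell\circ\cdots\circ f_1$. The same bound also follows directly from the lemma applied to the Lipschitz composition $F_\ell$.
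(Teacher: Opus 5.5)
Your proposal is correct and matches the paper's proof, which simply applies Lemma~\ref{lem:pointwise-holder} to each layer with $f=f_\ell$, $\mu=\mu_{\ell-1}$ and $\nu=\mu_\ell$. Your additional handling of the exceptional sets, using $\mu_{\ell-1}(f_\ell^{-1}(E_\ell))=\mu_\ell(E_\ell)=0$ so that $d_{\mu_\ell}(y)$ exists for $\mu_{\ell-1}$-a.e.\ $x$, is a detail the paper leaves implicit under its standing assumption that $d_\mu$ exists.
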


\begin{remark}
    In case every $\mu_\ell$ is exact dimensional (i.e., $d_{\mu_\ell}(x)$ exists and equals a constant $d_\ell$ for $\mu_\ell$-a.e.\ $x$), then $d_\ell\le d_{\ell-1}$ for all $\ell$.
\end{remark}

\begin{proof}
Apply Lemma~\ref{lem:pointwise-holder} to each layer. 
In the exact-dimensional case, the pointwise dimensions are a.e.\ constant, so the a.e.\ inequality becomes $d_\ell\le d_{\ell-1}$.
\end{proof}

In other words, the theorem states that pointwise dimension cannot increase over the layers of any Lipschitz neural network.

In \cref{app:def_haus_dim}, we present an analogous result for the \textit{Hausdorff dimension} (\cref{lem:holder-hausdorff} and \cref{thm:mono-hausdorff}), namely that the Hausdorff dimension cannot increase under Lipschitz maps. These results have important implications for the ID estimation of neural representation as they uncover an important contradiction between theory and commonly found empirical ID-related results.

\begin{remark}
    A special case of Lemma \ref{lem:pointwise-holder} and \ref{lem:holder-hausdorff} (Appendix) arises for bi-Lipschitz mappings. In that case, the results in the two lemmas hold with equality. However, as most neural networks cannot be guaranteed to be compositions of bi-Lipschitz maps, this result might only be of minor relevance in our context. We therefore deferred the discussion to \cref{app:bilipschitz}. 
\end{remark}

\paragraph{A Contradiction} Theorem \ref{thm:mono-pointwise} and Theorem \ref{thm:mono-hausdorff} (Appendix) imply that the ID cannot increase over the layers of any Lipschitz neural network. This theoretical result stands in stark contrast to the increasing patterns of estimated IDs that are commonly observed across empirical studies investigating layer-wise ID in neural networks (e.g., ID patterns observed in \cref{fig:id_fig}). 
Given that the actual layer-wise ID cannot increase, the layer-wise ID patterns found by these studies cannot correspond to the true IDs of the neural representations, not even in a relative sense (the bias cannot be consistent). Hence, estimated neural ID patterns are not only strongly biased but also \textit{not at all indicative} of the underlying IDs of neural representations. 

\subsection{Extensions to Union of Manifolds}\label{sec:union_manifolds}

So far, we have introduced the classic version of the manifold hypothesis, in which data is assumed to lie on a single low-dimensional manifold. However, the hypothesis can be generalized to consider data that lies on a union of disconnected manifolds. A schematic visualization of the single and union of manifolds hypotheses can be found in \cref{fig:union_of_manifolds}. 

\subsubsection{Single vs.\ Union of Manifolds Hypothesis} 

While the idea of a union of manifold hypothesis has been prominent in the clustering literature for some time \citep{vidal2011subspace, elhamifar2011sparse}, \citet{brown2023union} more recently provided empirical evidence that many commonly used image datasets are more likely to live on a manifold union rather than a single manifold. Simplified, the key observation of their work is that images from the same classes (i.e., classes in the MNIST dataset) seem to share a common support, while images from different classes have disconnected supports. 
Moreover, as the union of manifold hypothesis allows images from different categories or classes to lie on disconnected manifolds, each of these class-specific manifolds may have a different ID. In line with this, \citet{brown2023union} find estimated IDs to vary between classes for several image datasets, providing further evidence for the union of manifold hypothesis.

\begin{figure}[t!]
    \centering
    \includegraphics[width=0.45\textwidth]{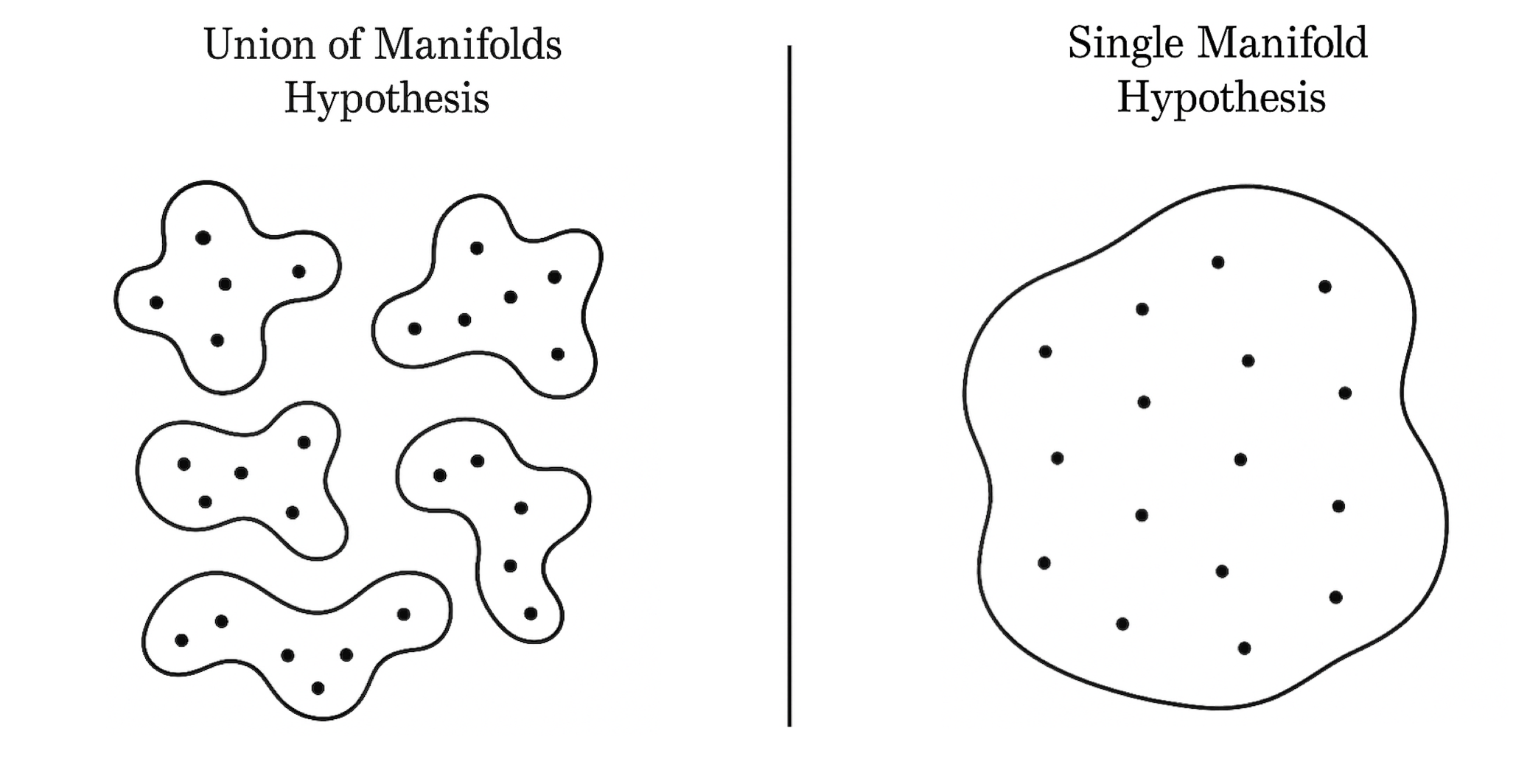}
    \vspace*{-2mm}
    \caption{Union of Manifolds vs.\ Single Manifold Hypothesis: Dots illustrate samples living on multiple disconnected manifolds (left) vs.\ a single manifold (right).}
    \label{fig:union_of_manifolds}
    \vspace{-1mm}
\end{figure}

\subsubsection{
IDs Under Unions of Manifolds}
Given the empirical evidence for the union of manifolds hypothesis (at least for image datasets), we investigate how our results from \cref{sec:est_true_ids} extend to this generalized version of the manifold hypothesis. 

\paragraph{Pointwise Dimension} Extending \cref{lem:pointwise-holder} to a union of manifolds is straightforward. Given that the dimension is only locally defined, the pointwise dimension is the same for points on the same manifold, but potentially different between points from disconnected manifolds. \cref{lem:pointwise-holder} can then be applied to each disconnected manifold of the union separately, yielding exactly the same conclusion as in \cref{lem:pointwise-holder} and \cref{thm:mono-pointwise}. That is, the pointwise dimension cannot increase under Lipschitz mappings, even if the data lies on a union of manifolds. 

\paragraph{Hausdorff Dimension} Similarly, the result of \cref{thm:mono-hausdorff} (Appendix) also extends to a union of manifolds, given that the considered set is a union of sets. However, as the Hausdorff dimension is defined globally for the entire set instead of locally for single points, the dimension can be made more precise in this case. For a (finite) union $Z = \bigcup_{i=1}^{p} M_i \subset \mathbb{R}^n$ of $p$ manifolds $M_i$, each with Hausdorff dimension $\dimH(M_i)=d_{\mathcal{M}_i}$, we get that the Hausdorff dimension of Z is $\dimH(Z)= \max_i d_{\mathcal{M}_i}$ {\citep[Prop.~2.12]{hochman2014lectures}}. Combining this with \cref{lem:holder-hausdorff} (Appendix), for a union of manifolds $Z$ under any Lipschitz map $\mbox{$f:\R^n\to\R^m$}$, we get that 
$$
    \dimH(f(Z))\le \dimH(Z)= \max_i d_{\mathcal{M}_i}.
$$
\paragraph{Class-specific ID Estimates} The previous two paragraphs show that the result from \cref{thm:mono-pointwise} and \cref{thm:mono-hausdorff} (Appendix) also hold under the union of manifold hypothesis. In addition, we can also verify that the mismatch between theory and practice of ID estimates remains. For this, we further refine the analysis of \cref{fig:id_fig} by estimating the class-specific IDs of layer-wise representations, depicted in \cref{fig:id_classes} for the ResNet-34 model. 
\begin{figure}[b!]
    \centering
    \includegraphics[width=0.49\textwidth]{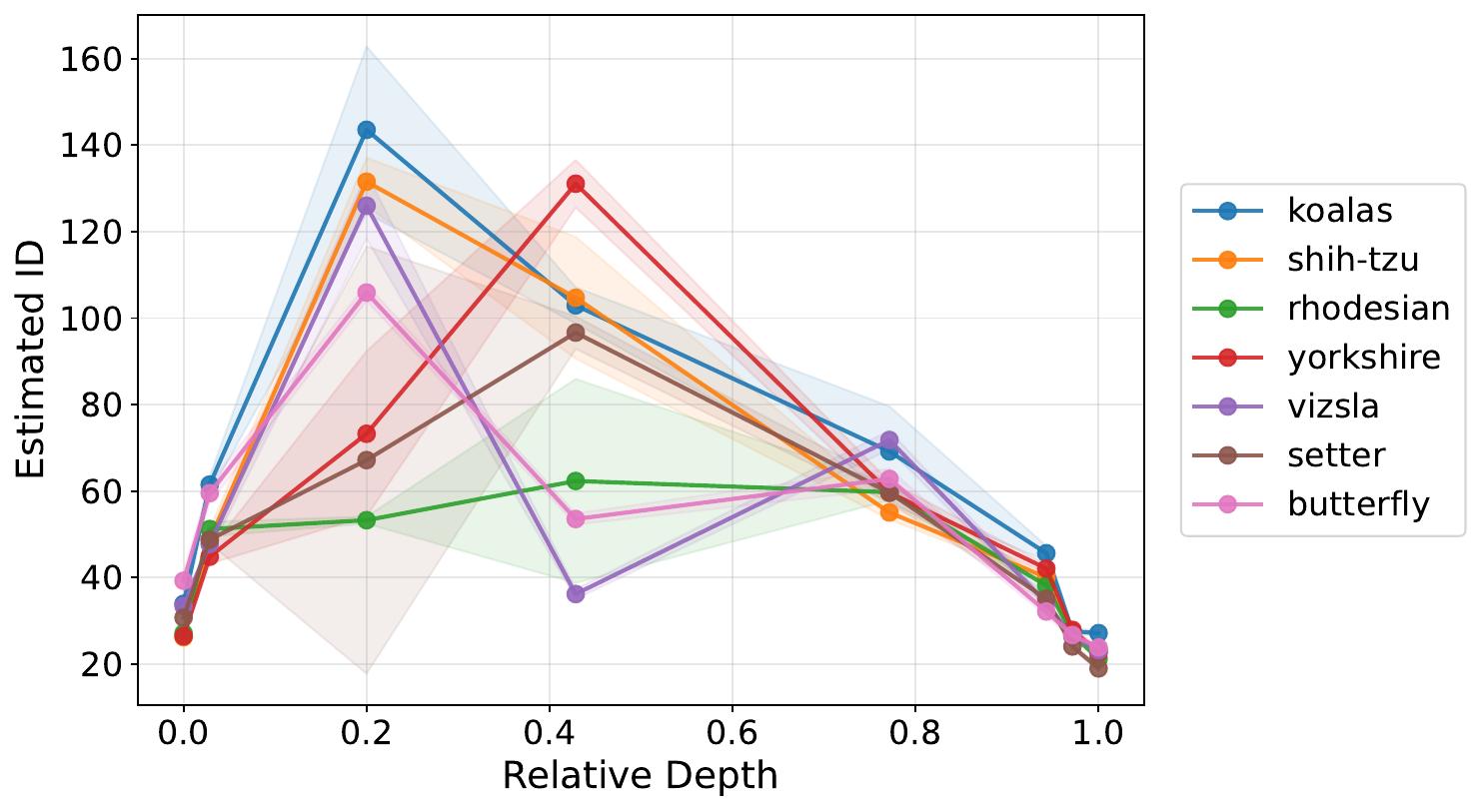}
    \caption{Estimated IDs of layer-wise representations from a ResNet-34 model for images with different categories (colors) from ImageNet. The x-axis shows the relative depth of model layers, and shaded areas show the estimated standard errors.
    }
    \label{fig:id_classes}
\end{figure}
That is, we now estimate IDs of layer-wise representations for different classes of the ImageNet dataset separately. 
Results show that even class-specific ID estimates increase over the layers of the model. 
Hence, also these class-specific ID estimates cannot be considered indicative of the underlying ID of class-specific representations. Similar results for other models can be found in \cref{app:IDs_per_class}.

\subsection{IDs \& Manifolds of LLM Representations}\label{sec:id_llms}

The previous sections discussed ID estimation of neural representations in general, covering various types of model architectures. While the experiments mainly deal with vision models, the theoretical results derived in \cref{sec:est_true_ids} also hold for other types of model architectures such as transformers. However, 
due to the increasing relevance and growing usage of large language models (LLMs) in recent years, we believe that providing a dedicated discussion and targeted results for LLMs is essential.

\paragraph{IDs in LLMs} The first question that arises in estimating ID in LLMs is what notion of ID one aims to estimate. This is relatively straightforward in the case of vision models, as each image gets mapped to a corresponding layer-wise representation, and IDs are then estimated layer-wise over the entire image dataset. However, in the case of LLMs, datasets at inference time are usually prompts, each corresponding to a single or multiple sentences. Each token in a sentence then gets mapped to a specific representation in the embedding layer of the LLM (cf.~\cref{fig:word_emd_llms}). In an autoregressive decoder, the hidden state of the current layer at a position $t$ is computed from the previous layer’s states at positions $\leq t$. The hidden state at the final position of a prefix thus encodes information about the entire input seen so far, giving it a special role in the prediction of the next layer and output.

\begin{figure}[htbp]
    \centering
    \includegraphics[width=0.78\columnwidth]{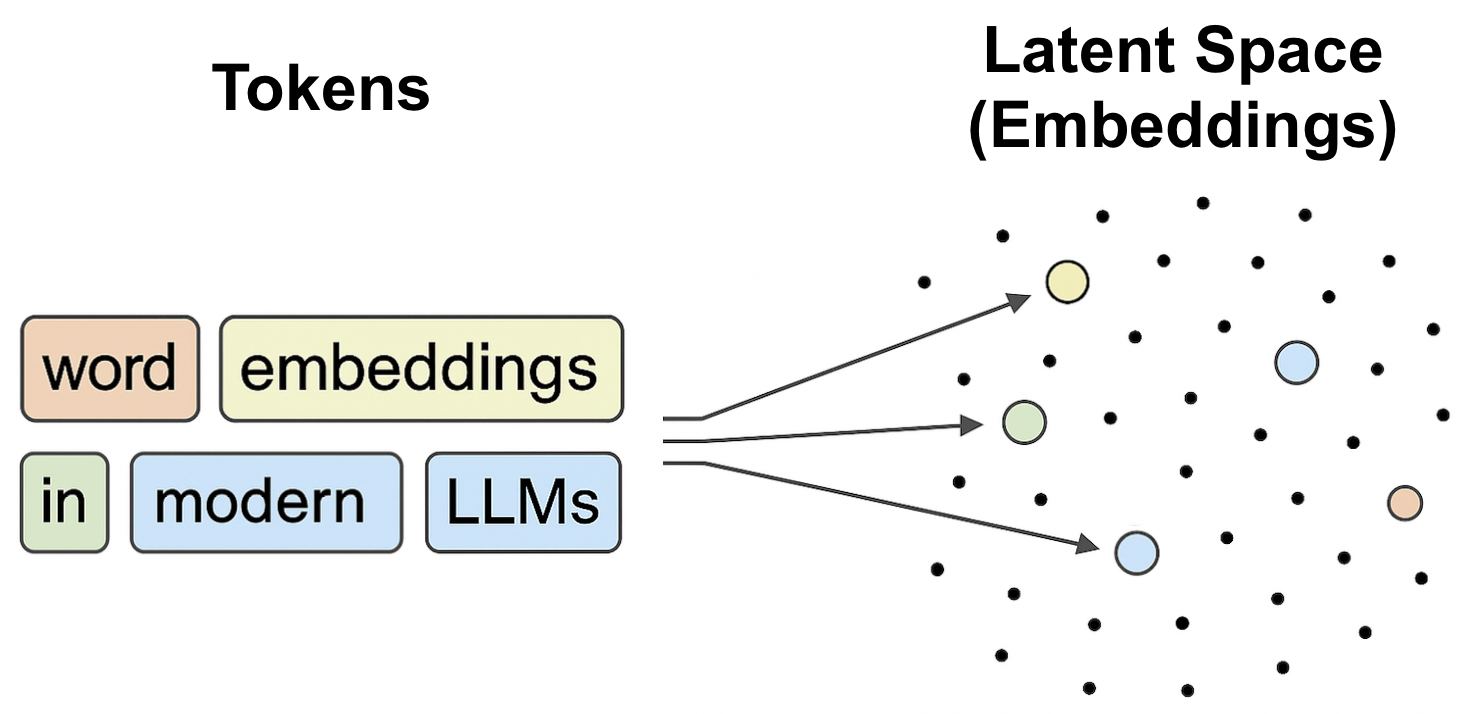}
    \caption{Visualization of word embeddings in LLMs: Each token in the prompt gets mapped to a unique point in the latent space of the embedding.
    }
    \label{fig:word_emd_llms}
\end{figure}

For this reason, the last hidden state can be used as a representative candidate in each layer for ID estimation \citep[see, e.g.,][]{cheng2025emergence}. 
Before discussing theoretical properties specific to LLMs, we will empirically analyze their ID patterns and relate findings to those of previous sections.

\subsubsection{ID Patterns of LLMs}\label{sec:id_patterns_llms}

For ID estimation in LLMs, we use 10k prompts from the popular wikitext \citep{merity2017pointer} dataset. For each prompt, we extract layer-wise representations from the pretrained Llama-3.1-8B \citep{grattafiori2024llama}, Mistral-7B-v0.3 \citep{jiang2023mistral7b}, and Pythia-6.9B \citep{biderman2023pythia}. Further details can be found in \cref{app:add_exp}.

\paragraph{Gride} ID estimation in LLMs is often based on the Gride ID estimator \citep{denti2022gride}. It can be seen as a variant of the TwoNN and MLE estimator. 
The only difference is that Gride uses distance ratios of non-consecutive NN pairs (e.g.\ $1^{st}$ vs.\ $2^{nd}$, $2^{nd}$ vs.\ $4^{th}$ 
etc.) rather than comparing only distances of the first nearest neighbors. Although the other two estimators could in principle also be used, Gride is sometimes preferred in the context of LLMs, as it can capture the local geometry beyond only the next neighbors, which can be advantageous in the high-dimensional representation spaces of LLMs. 
Important for our analysis is that Gride also targets the pointwise dimension analogous to TwoNN and MLE. Hence, the results from \cref{sec:est_true_ids} also apply.

\paragraph{ID Patterns in LLMs} The results of our LLM-based ID analysis are depicted in \cref{fig:ids_llms}. The figure shows the averages (thick line) of six different scalings of Gride (thin lines). Each scaling compares one of the $2^{nd}/1^{st}, \ldots, 64^{th}/32^{nd}$ NN distance ratios. Dotted lines correspond to the first scaling, which is equivalent to the TwoNN estimator. Further details are provided in \cref{app:add_exp}. Estimated ID patterns are relatively consistent among the considered LLMs and exhibit an increase, especially in the early layers. As before, this is in violation of the theoretical results about layer-wise IDs in neural networks, raising the same contradiction as in \cref{sec:est_true_ids}.

\begin{figure}[htbp]
    \centering
    \includegraphics[width=0.48\textwidth]{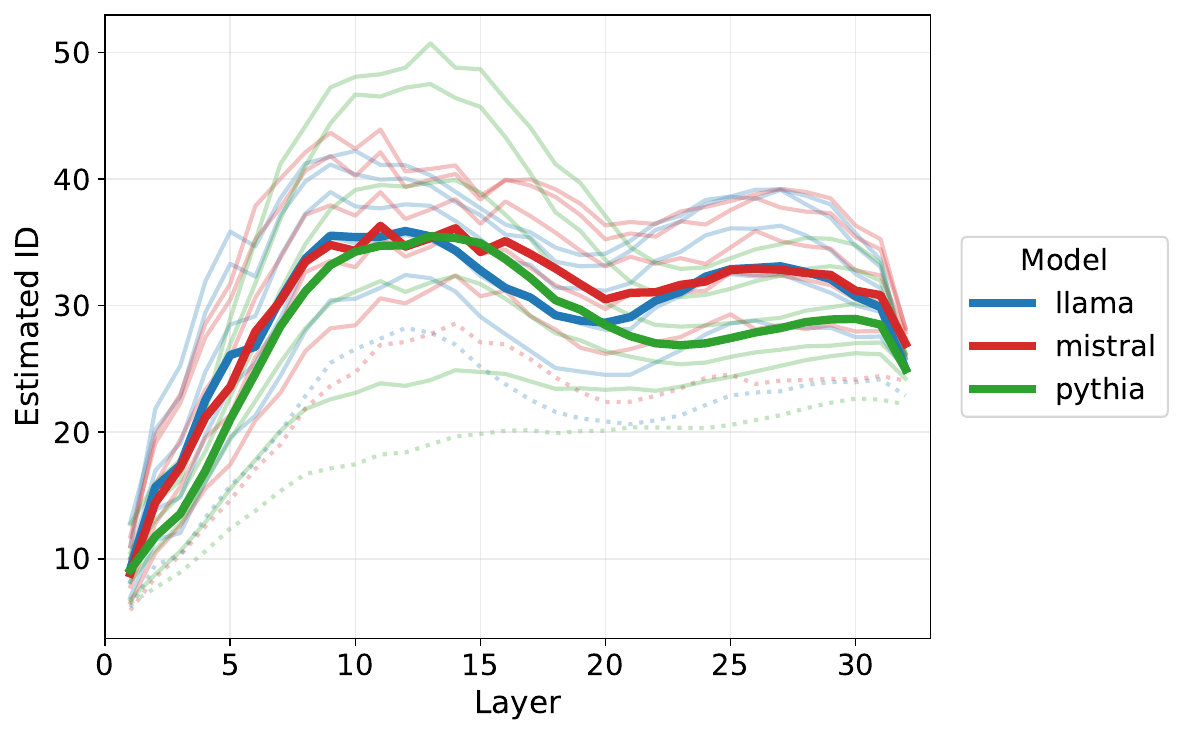}
    \caption{Estimated IDs of the layer-wise representations of various LLMs. Averaged ID estimates (thick lines) of six Gride scalings (thin lines), including the TwoNN (dotted).
    }
    \label{fig:ids_llms}
\end{figure}

\subsubsection{Exact IDs of LLMs representation}

As shown in \cref{fig:word_emd_llms}, LLMs 
map each token in a sequence to a unique embedding. While the number of unique points in this latent space depends on the vocabulary size of the model, they are always \textit{finite}. This has a simple but important implication. As ID definitions, such as the Hausdorff (or pointwise) dimension, are zero for point sets (or the distribution over those), the ID of such LLM embeddings must be zero. For finite sequences of tokens, this reasoning can be extended to hidden layers, given that they are continuous functions, which can map a finite point set only to a countable set of points. The latter has again a Hausdorff and pointwise dimension of zero. A formal result of this is given in \cref{lem:zero-dim-llm} (\cref{app:exact-id-llms}). We emphasize, however, that this result is specific to token-based inputs and therefore to LLMs. In \cref{app:diff-img-text}, we discuss why it does not transfer to image representations, and hence does not apply to vision models such as CNNs and vision transformers.

\begin{remark}
    While other concerns about the manifold hypothesis for LLM embeddings have been raised \citep{robinson2025token}, we believe that this simple yet insightful result above may be of interest on its own.
\end{remark}

\section{FORCES BEHIND LAYER-WISE ID PATTERNS}\label{sec:driving_forces}

In the previous section, we saw that increasing layer-wise ID patterns of neural representations are common to all discussed neural architectures. While the exact shape may differ between vision and language models (cf.~\cref{fig:id_fig} and~\cref{fig:ids_llms}), all exhibit increasing estimated IDs in the early layers. However, our theoretical investigation in \cref{sec:est_true_ids} showed that ID cannot increase over layers of such neural networks, implying that their estimates do not track the underlying true ID. 

Nonetheless, the consistency of ID patterns across pretrained models with different architectures suggests an important phenomenon that merits attention. We therefore investigate the forces underlying these layer-wise ID patterns, aiming to better understand what ID estimates may capture. While the experiments discussed here mainly focus on LLMs, analogous results for various vision models, including different CNNs and vision transformers (ViTs), are provided in \crefrange{app:id_amb_analysis}{app:entropy_vs_id}. 

\subsection{NN Distances}\label{sec:nn_dist}
Each of the considered ID estimators (MLE, TwoNN, Gride) is based on nearest-neighbor (NN) distances. 
Therefore, we investigate these distances in more detail in the following.

In \cref{fig:nn_dist_llms}, we plot the layer-wise NN distances that give rise to ID estimators in \cref{fig:ids_llms}. Results clearly show that NN distances are growing over the layers of all models. However, the $1^{st}$ and $2^{nd}$ NN distances grow by a similar amount as the $32^{nd}$ and $64^{th}$ NN distances (at least in the early layers). This implies that the last hidden state representations in each layer in LLMs move away from each other in the latent space, creating similar distances between all representations. 

This observation underlies the observed ID patterns. Considering again the construction of the ID estimators, each of them involves ratios of different pairs of NN distances. As the distances of all NNs grow by similar amounts (in early layers), their ratios shrink towards one. This is what drives estimated IDs to increase. In later layers, farther neighbors seem to grow slightly faster than closer NNs, which leads to a slight decrease in estimated IDs. 
\begin{remark}
    We omit the last layer from \cref{fig:nn_dist_llms} and most other plots in this section, since its representations are drastically altered by a LayerNorm transformation applied only at the final layer. As this effect is not central to our analysis, we exclude the last layer for readability and defer the full results to the Appendix.
\end{remark}\label{rm:last_layer}
The following sections explore different factors that could drive the increasing separation of representations in latent space.

\begin{figure}[t!]
    \centering
    \includegraphics[width=0.48\textwidth]{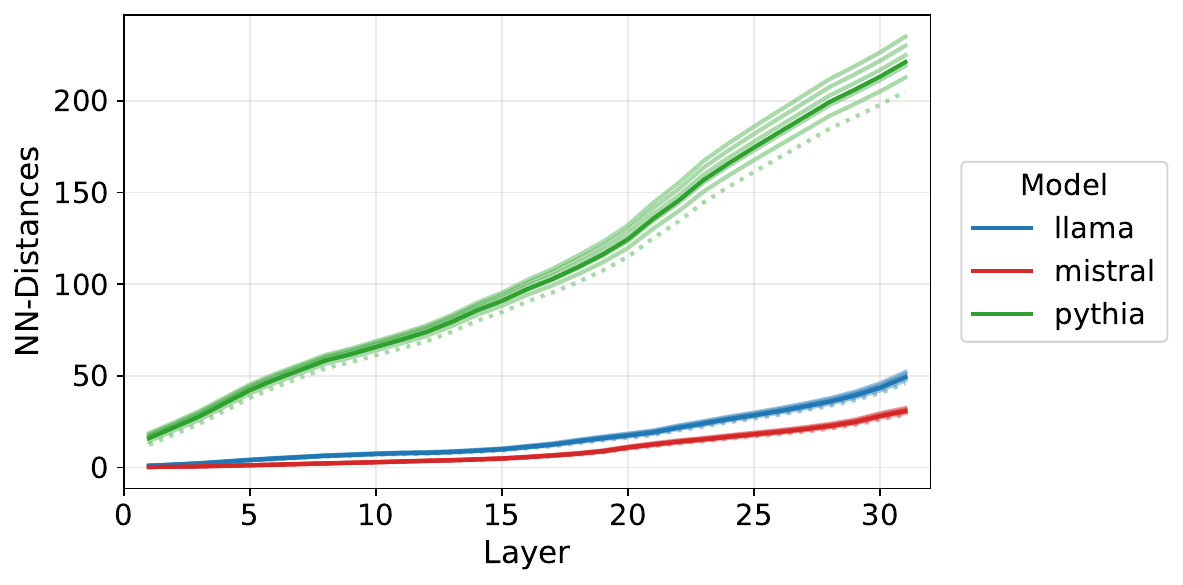}
    \caption{NN distances of layer-wise LLM representations (last layer excluded). For each model, each line (top to bottom) corresponds to the averages of $64^{th}$ \& $32^{nd}, \ldots, 2^{nd}$ \& $1^{st}$ NN distances. Solid lines denote the average over all 6 lines, with the TwoNN highlighted as a dotted line.
    }
    \label{fig:nn_dist_llms}
\end{figure}

\subsection{Ambient Space Dimension}\label{sec:amb_dim}
A natural candidate to explain the increasing separation of representations is the \textit{ambient space dimension}, i.e., the dimension of the representations in each layer.
This dimension may be a driving factor because, in high-dimensional spaces, points become increasingly separated, causing the distance to the nearest neighbor to approach that to the farthest neighbor \citep{beyer1999nearest, aggarwal2001surprising}.

Although the described mechanism seems plausible, we find strong evidence against the ambient space dimension being the key driving factor. The reasoning for the language models is simple. For all considered language models, the size of hidden representations remains constant over all layers, in our case 4096. Hence, the ambient space dimension cannot drive observed ID patterns. While the layer-wise ambient dimension can change for the considered vision models, \cref{fig:id_vs_emb_dim} in the Appendix also indicates that the layer-wise ambient space dimensions and ID-estimates seem rather unrelated.

\subsection{Cosine Similarity}\label{sec:cos_sim} 
Another important factor could be the \textit{cosine similarity}. The rationale behind this is that the last hidden states from different prompts should incorporate different contexts.  Therefore, their representations may become (nearly) orthogonal over the layers of the neural network. However, as depicted in \cref{fig:cos_sim}, the average cosine similarity between different last hidden state representations does not vary significantly over the model layers. While it is generally low (around 0.2) for the llama and mistral model, it is generally high (around 0.8) for the pythia model. Hence, cosine similarity does not seem to be a driving factor for increasing pairwise distances. 
\begin{remark}
    In contrast to the above results, \citet{viswanathan2025geometry} find the pairwise cosine similarity to increase over the layers of llama. However, they consider the cosine similarity between representations of the same prompt. In this case, an increasing cosine similarity is expected, given that the representations are updated jointly and impacted by the same tokens. 
\end{remark}
\begin{figure}[htbp]
    \centering
    \includegraphics[width=0.49\textwidth]{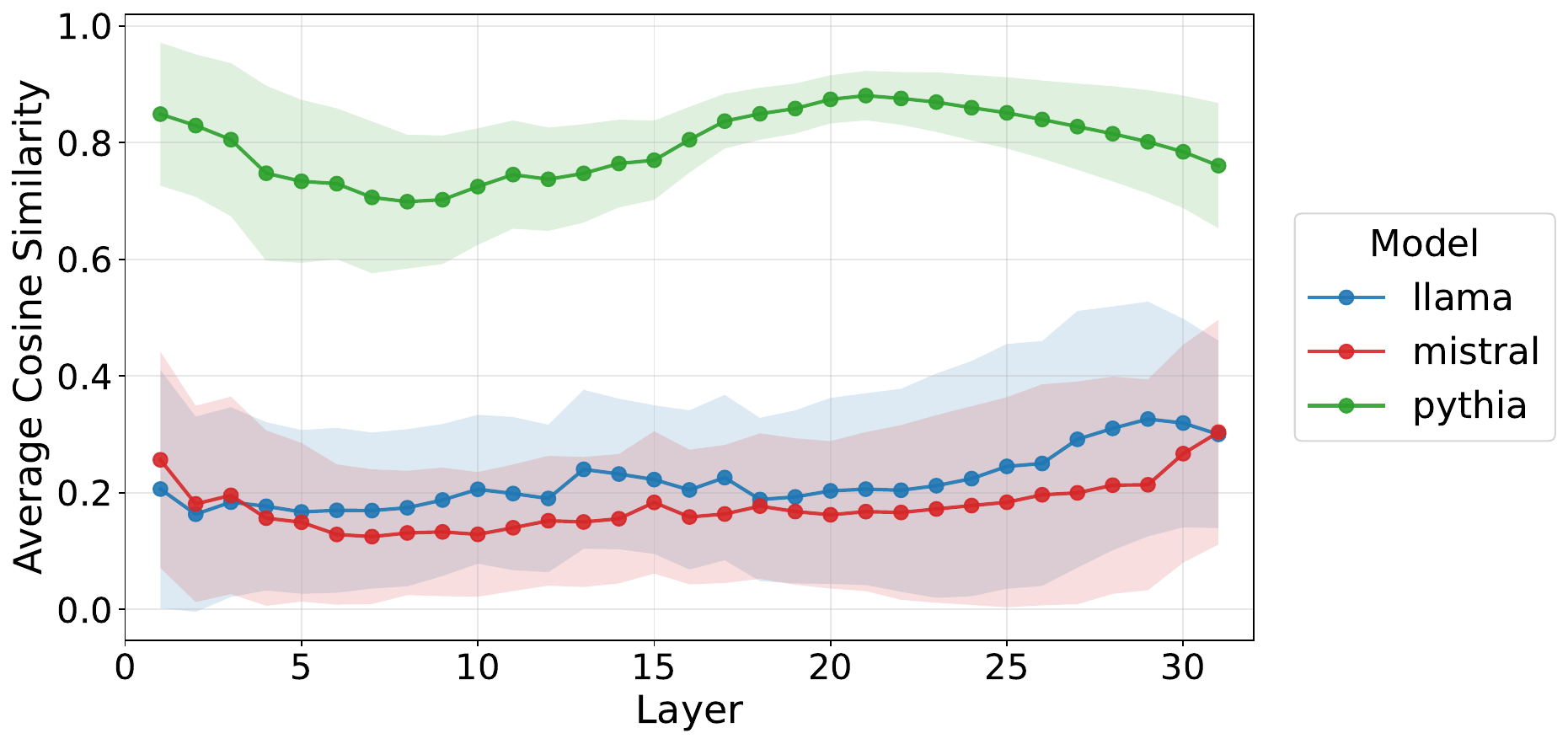}
    \caption{Average cosine similarity between layer-wise representations of various LLMs  (last layer excluded). The shaded area band represents twice the standard error. 
    }
    \label{fig:cos_sim}
\end{figure}

\subsection{Size of Representations}\label{sec:size_of_reps}

While the previous two candidates cannot sufficiently explain increasing NN distances, we consider the size of representations next. The size is naturally measured by the distance of each representation to the origin in latent space. As shown in \cref{fig:l2_dist_llms}, these distances grow across the hidden layers of the considered LLMs and exhibit patterns similar to the NN distances in \cref{fig:nn_dist_llms}. Similar layer-wise growth in the $L_2$ norm of hidden representations has also been observed across several other LLMs \citep{heimersheim2023residual, gupta2024geometric, lawson2025residual}. 
\begin{figure}[htbp]
    \centering
    \includegraphics[width=0.49\textwidth]{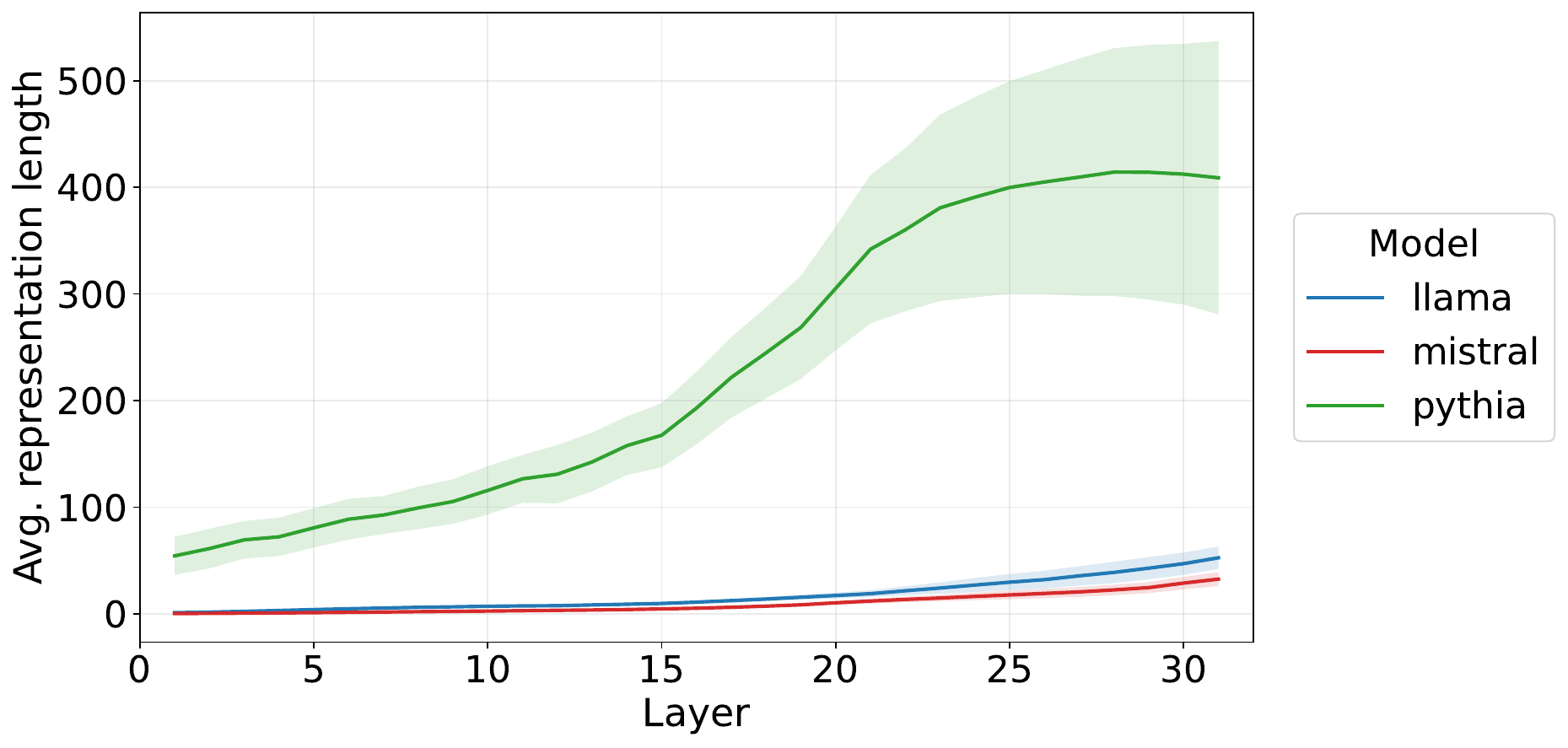}
    \caption{Average $L_2$ norm of layer-wise representations for llama, mistral and pythia (last layer excluded). The shaded area band represents twice the standard error. 
    }
    \label{fig:l2_dist_llms}
\end{figure}
\paragraph{Expansion in Latent Space} The layer-wise growth of the $L_2$ norms of
last hidden state representations corresponds to an expansion in latent space over the hidden layers. 
We believe that there is an intuitive explanation for this phenomenon: For an accurate next-token prediction, LLMs need to enrich last-token representations with the specific context of each prompt, thereby distinguishing it from other prompts. LLMs may achieve this over their hidden layers by sequentially moving the last hidden state representations into different areas of their latent space. This yields the observed expansion.

We have seen that ID estimates are affected by layer-wise expansions in latent space. However, it remains unclear whether this expansion occurs uniformly or primarily along certain directions. To shed light on this question, we investigate how representations are distributed in latent space and how this distribution changes across hidden layers, using entropy-based metrics in the next section.

\subsection{Entropy}\label{sec:other_metrics}
Apart from IDs, various other metrics have been used to analyze 
the layer-wise geometry of neural representations. Recently, \cite{skean2025layer} studied 
\textit{entropy}-based metrics
and also
found distinct layer-wise patterns across different architectures. While they consider different entropy-based metrics, all of them essentially measure the entropy of the distribution of eigenvalues of layer-wise representations. We provide a formal definition of these metrics in \cref{app:metrics}.

Inspired by these findings, we study layer-wise von Neumann entropy estimates and and find that they follow a pattern strikingly similar to that of ID estimates (cf.~\cref{fig:id_vs_entropy_llms}), rising strongly in early layers and falling again in later ones.
The strong connection between layer-wise ID and entropy patterns can also be found across various convolutional architectures and vision transformers (cf.~App.\ \cref{fig:id_vs_entropy_vits} and \cref{fig:id_vs_entropy}).

\begin{figure}[t]
    \centering
    \includegraphics[width=0.49\textwidth]{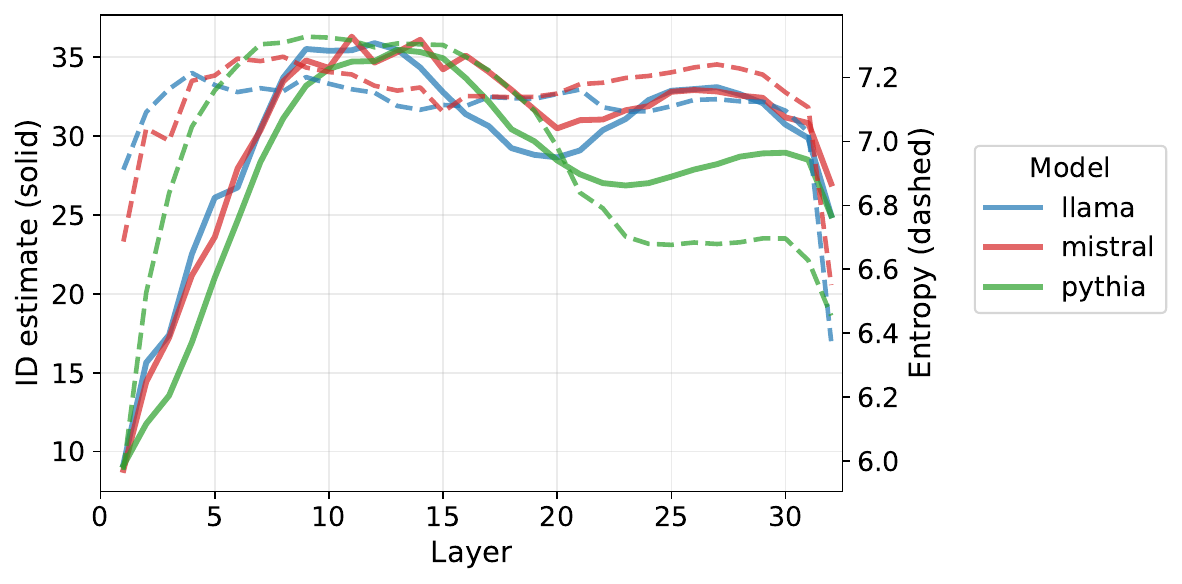}
    \caption{Estimated IDs (Gride) and entropy of layer-wise representation of various LLMs. Details in \cref{app:add_exp}.
    }
    \label{fig:id_vs_entropy_llms}
\end{figure}

\paragraph{Comparing IDs \& Entropy Estimates} Compared to ID estimators, layer-wise increases in entropy are well in line with theory and have a natural interpretation. As entropy estimates measure the spread of the eigenvalue distribution, they increase when representations are spread across many eigendirections and decrease when they are concentrated in a few. Hence, the patterns in \cref{fig:id_vs_entropy_llms} suggest that variance becomes more broadly distributed across linear directions in early layers and again more concentrated in later ones.

Small deviations between ID and entropy estimates might come from the information used. ID estimates only use local information from nearest neighbors, while entropy considers the entire distribution of all data points. Moreover, entropy captures the extent to which variance is distributed across linear directions in latent space. This not only gives estimators a clear interpretation, but is also in line with empirical evidence in modern LLMs, suggesting information is encoded linearly in the representations \citep{marks2024linear, jiang2024linear, park2024linear, park2025linear}. The strong connection between the two metrics in \cref{fig:id_vs_entropy_llms} may therefore indicate that the observed ID patterns are driven by layer-wise changes in the spread of variance
across \textit{linear} directions in the latent space.

\paragraph{Theoretical Connection} To the best of our knowledge, this is the first work to identify a striking resemblance between ID and entropy estimates for neural representations. While prior work has explored the theoretical connection between the two \citep{costa2006determining, bailey2021relationships, bailey2022local}, it focuses on entropy on the manifold, whereas our analysis concerns entropy in the ambient space.

\section{CONCLUSION AND FUTURE OUTLOOK}
\label{sec:conclusion}

\subsection{New Perspective \& Impact}

\paragraph{A New Perspective on ID Estimates of Neural Representations}\label{sec:new_perspective_ids}

Our theoretical results, in combination with our experiments, show that common ID estimates of neural representations do not recover the true IDs. However, the fact that the estimated layer-wise ID patterns are consistent across independently trained neural models with varying architectures (both for vision and text models) indicates that they capture an important geometric phenomenon inherent to neural representations more generally. Instead of interpreting these as IDs of layer-wise neural representation manifolds, our analysis suggests that it seems more appropriate to view them as reflecting a distinct geometric characteristic of layer-wise neural representations that 
can also be captured with 
entropy-based metrics. Namely, how variance is distributed across linear directions in latent space and how this distribution changes over the hidden layers.

\paragraph{Impact} Studying the IDs of layer-wise neural representations, especially in the context of transformer-based models, has become increasingly popular in recent years. 
Studies have considered layer-wise ID estimation of neural representations in a variety of settings and across a broad range of models, including \textit{decoder-only} models \citep{doimo2024representation, viswanathan2025geometry, cheng2025emergence}, \textit{encoder-decoder} models \citep{valeriani2023geometry}, and \textit{vision models} including different convolutional architectures and ViTs \citep{ansuini2019intrinsic,kvinge2023internal, wang2024exploring, konz2024intrinsic, shah2025topographic, wang2025textural, roschmann2025time}. Given that increasing ID patterns are commonly observed in these studies, our results suggest that such patterns should be interpreted with care. More broadly, they indicate that conclusions drawn from layer-wise ID estimation may need to be revisited across a wide range of contexts and model classes. Our analysis further suggests that complementing ID-based results with entropy-based measures may offer a more complete picture of the structure of neural representations.

\subsection{Future Work}
\paragraph{Entropy \& ID} While there might be many factors driving the found phenomena in neural representations, we provide empirical evidence that suggests ID estimates are connected to entropy. Based on current evidence, a deeper investigation and formalization of this connection is a promising direction for future research. 

\paragraph{ID Estimators} 
Another future avenue is the design of a new estimator that provides more reliable ID estimates of neural representations. While developing such an estimator is beyond the scope of the present work, we believe our theoretical results offer a useful insight for benchmarking future methods. In particular, layer-wise ID estimates should not increase across the layers of (Lipschitz) neural networks as a minimal requirement to be consistent with theory. This condition is necessary but not sufficient for reliability, since a non-increasing pattern can still differ from the true IDs, but it already rules out several commonly used estimators, as shown in our analysis. We believe this provides a valuable starting point for principled evaluation of ID estimators on neural representations. 

\bibliography{bibliography}

\section*{Checklist}

%

\begin{enumerate}

  \item For all models and algorithms presented, check if you include:
  \begin{enumerate}
    \item A clear description of the mathematical setting, assumptions, algorithm, and/or model. [Yes] Clear descriptions are provided both in the main text and the Supplementary Material.
    \item An analysis of the properties and complexity (time, space, sample size) of any algorithm. [Not Applicable]
    \item (Optional) Anonymized source code, with specification of all dependencies, including external libraries. [Yes] The code is made available in the Supplementary Material.
  \end{enumerate}

  \item For any theoretical claim, check if you include:
  \begin{enumerate}
    \item Statements of the full set of assumptions of all theoretical results. [Yes] A discussion of the full set of assumptions is provided along each theoretical result.
    \item Complete proofs of all theoretical results. [Yes] Complete proofs are provided for all our theoretical results.
    \item Clear explanations of any assumptions. [Yes] We aimed to clearly explain all the mentioned assumptions.   
  \end{enumerate}

  \item For all figures and tables that present empirical results, check if you include:
  \begin{enumerate}
    \item The code, data, and instructions needed to reproduce the main experimental results (either in the supplemental material or as a URL). [Yes] Code, data, and instructions reproduce all experimental results.
    \item All the training details (e.g., data splits, hyperparameters, how they were chosen). [Yes] Details are provided in the Supplementary Material.
    \item A clear definition of the specific measure or statistics and error bars (e.g., with respect to the random seed after running experiments multiple times). [Yes] Clear definitions are provided.
    \item A description of the computing infrastructure used. (e.g., type of GPUs, internal cluster, or cloud provider). [Yes] Details are provided in the Supplementary Material.
  \end{enumerate}

  \item If you are using existing assets (e.g., code, data, models) or curating/releasing new assets, check if you include:
  \begin{enumerate}
    \item Citations of the creator If your work uses existing assets. [Yes]
    \item The license information of the assets, if applicable. [Not Applicable]
    \item New assets either in the supplemental material or as a URL, if applicable. [Not Applicable]
    \item Information about consent from data providers/curators. [Not Applicable]
    \item Discussion of sensible content if applicable, e.g., personally identifiable information or offensive content. [Not Applicable]
  \end{enumerate}

  \item If you used crowdsourcing or conducted research with human subjects, check if you include:
  \begin{enumerate}
    \item The full text of instructions given to participants and screenshots. [Not Applicable]
    \item Descriptions of potential participant risks, with links to Institutional Review Board (IRB) approvals if applicable. [Not Applicable]
    \item The estimated hourly wage paid to participants and the total amount spent on participant compensation. [Not Applicable]
  \end{enumerate}

\end{enumerate}

\clearpage
\appendix
\crefalias{section}{appendix}
\crefalias{subsection}{appendix}
\crefalias{subsubsection}{appendix}
\thispagestyle{empty}

\onecolumn
\aistatstitle{Supplementary Materials: Rethinking Intrinsic Dimension Estimation in Neural Representations}

\section{Neural Networks are Lipschitz Mappings} \label{app:NN_Lipschitz}

In this section, we discuss in more detail the Lipschitz assumption used in our main results. We consider deterministic neural network layers at inference time and assume all weights and scalars are finite. Then, the following holds:

\begin{itemize}
    \item Standard linear or convolutional layers are Lipschitz mappings \citep[Cor.~2.1]{kim2021lipschitz};
    \item Pointwise activations such as ReLU, leaky-ReLU, $\tanh$, sigmoid, softplus, GELU / SiLU are Lipschitz \citep{tsuzuku2018lipschitz};
    \item Pooling operators and residual additions preserve Lipschitzness of the composition \citep{tsuzuku2018lipschitz, bethune2022pay};
    \item Softmax is Lipschitz on $\R^d$ \citep[Prop.~4]{gao2017properties};
    \item Normalization layers such as LayerNorm, BatchNorm, 
    and \mbox{RMSNorm} are Lipschitz \citep{tsuzuku2018lipschitz}.
\end{itemize}
Given that most neural networks are compositions of Lipschitz mappings (compositions of the above components), and given that compositions of Lipschitz mappings remain Lipschitz,  Theorem~\ref{thm:mono-hausdorff} and Theorem~\ref{thm:mono-pointwise} 
apply for such networks.

\paragraph{Not Globally Lipschitz Mappings}
Operations that are discontinuous or not globally Lipschitz are, for example, hard quantization or sign/argmax/top-$k$ gating. Another subtle exception is self-attention. While there are Lipschitz variants such as $L_2$ self-attention, standard (scaled) dot-product self-attention is not globally Lipschitz on unbounded domains \citep{kim2021lipschitz}. Clearly, the same holds for multi-head attention given that it is just a linear map of single self-attention outputs. Nevertheless, if the input space is compact (e.g.\ for bounded inputs), self-attention is Lipschitz on that set \citep{kim2021lipschitz}.
Hence, in this and the other special cases, one could alternatively state the results of Theorem~\ref{thm:mono-hausdorff} and Theorem~\ref{thm:mono-pointwise} on a compact subset of the data domain on which each layer is Lipschitz. The conclusions then hold relative to that subset.

\section{Omitted Proofs and Derivations}\label{app:proofs}

\subsection{Hausdorff Dimension}\label{app:def_haus_dim}
\begin{definition}[Hausdorff measure and dimension \citep{hausdorff1918dimension}]
    Let $s\ge0$ and $\delta >0$ and $E\subset\mathbb R^d$. Define 
\[
 \Hd^s_\delta(E)\ :=\ \inf\Big\{\sum_{i=1}^{\infty} (\operatorname{diam} U_i)^s:\ E\subset\bigcup_{i=1}^{\infty} U_i,\ \operatorname{diam} U_i\le\delta\Big\},
\]
where the infimum is considered with respect to all countable $\delta$-covers $\{U_i\}$ of $E$, and $\operatorname{diam}U:=\sup\{\|x-y\|:x,y\in U\}$. Further, the $s$-dimensional Hausdorff measure is defined by
\[
 \Hd^s(E)\ :=\ \lim_{\delta\downarrow 0}\Hd^s_\delta(E)\ =\ \sup_{\delta>0}\Hd^s_\delta(E).
\]
The Hausdorff dimension of the set $E$ is then defined by
\[
 \dim_H(E)\ :=\ \inf\{s:\ \Hd^s(E)=0\}\ =\ \sup\{s:\ \Hd^s(E)=\infty\}.
\]
\end{definition}

\subsubsection{Monotonicity of the Hausdorff dimension under Lipschitz Maps}

The following Lemma \ref{lem:holder-hausdorff} is a classic result from fractal geometry and demonstrates that the Hausdorff dimension cannot grow under Lipschitz mappings \citep[see, e.g., ][Prop.~2.3 \& Cor.~2.4 (a)]{falconer2013fractal}. We provide a proof for completeness. We start by showing the result for the more general case of Hölder smooth maps ($f$ is \emph{$(L,\alpha)$-Hölder}, $\alpha\in(0,1]$, if \mbox{$\|f(x)-f(y)\|\le L\|x-y\|^\alpha$}) and follow the result for Lipschitz maps ($\alpha=1$) from it.

\begin{lemma}[Hausdorff dimension under Hölder/Lipschitz mappings]
\label{lem:holder-hausdorff}
Let $f:\R^n\to\R^m$ be $(L,\alpha)$-Hölder and $E\subset\R^n$. Then
\[
\dimH\big(f(E)\big)\ \le\ \frac{1}{\alpha}\,\dimH(E).
\]
In particular, if $f$ is Lipschitz ($\alpha=1$), then $$\dimH(f(E))\le \dimH(E).$$
\end{lemma}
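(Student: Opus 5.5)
The plan is the standard covering argument: push a cheap cover of $E$ forward through $f$ and compare the Hausdorff sums directly. The key observation is that Hölder continuity controls diameters of images. For any set $U\subset\R^n$ and any $x,y\in U$ we have $\|f(x)-f(y)\|\le L\|x-y\|^\alpha\le L(\operatorname{diam}U)^\alpha$, so
\[
\operatorname{diam} f(U)\ \le\ L(\operatorname{diam}U)^\alpha .
\]

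Fix $s>\dimH(E)$. By the definition of the Hausdorff dimension, $\Hd^s(E)=0$, so $\Hd^s_\delta(E)=0$ for every $\delta>0$ (since $\Hd^s_\delta\le\Hd^s$).

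Given $\eta>0$ and $\delta>0$, choose a countable cover $\{U_i\}$ of $E$ with $\operatorname{diam}U_i\le\delta$ and $\sum_i(\operatorname{diam}U_i)^s<\eta$. The sets $\{f(U_i)\}$ then cover $f(E)$, and each satisfies $\operatorname{diam}f(U_i)\le L\delta^\alpha=:\delta'$. Using the exponent $s/\alpha$ we get
\[
\Hd^{s/\alpha}_{\delta'}\big(f(E)\big)\ \le\ \sum_i\big(\operatorname{diam} f(U_i)\big)^{s/\alpha}\ \le\ L^{s/\alpha}\sum_i(\operatorname{diam}U_i)^s\ <\ L^{s/\alpha}\eta .
\]
Since $\eta$ is arbitrary, $\Hd^{s/\alpha}_{\delta'}(f(E))=0$. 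As $\delta\downarrow0$ we also have $\delta'\downarrow0$, so $\Hd^{s/\alpha}(f(E))=0$.

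It follows that $\dimH(f(E))\le s/\alpha$. Letting $s\downarrow\dimH(E)$ gives the claim, and setting $\alpha=1$ yields the Lipschitz case $\dimH(f(E))\le\dimH(E)$.

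There is no substantive obstacle here. The only points needing care are bookkeeping:
\begin{itemize}
\item the exponent $s/\alpha$ is chosen so that the factor $(\operatorname{diam}U_i)^{\alpha}$ raised to $s/\alpha$ becomes exactly $(\operatorname{diam}U_i)^s$;
\item the mesh $\delta'=L\delta^\alpha$ still tends to $0$;
\item the case $\dimH(E)=\infty$ makes the statement trivial, and the edge case $s=0$ with an empty or countable $E$ is covered by the same argument.
\end{itemize}
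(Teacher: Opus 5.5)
Your proof is correct and uses the same covering argument as the paper. You push a small-mesh cover of $E$ with small $s$-sum through $f$, bound $\operatorname{diam} f(U_i)\le L(\operatorname{diam}U_i)^\alpha$, get $\dimH(f(E))\le s/\alpha$, and then let $s\downarrow\dimH(E)$. The only difference is that you take the exponent exactly $s/\alpha$, whereas the paper takes some $t>s/\alpha$ and then needs $(\operatorname{diam}U_i)^{\alpha t}\le(\operatorname{diam}U_i)^s$, which tacitly requires $\delta\le 1$; your choice makes that step an identity and is slightly cleaner.
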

\begin{proof}
Fix $s>\dim_H(E)$. Then by definition $\Hd^s(E)=0$, hence for each $\eta>0$ there exists $\delta>0$ with $\Hd^s_\delta(E)<\eta$. This means there is a cover $E\subset\bigcup_i U_i$ with $\operatorname{diam}(U_i)\le\delta$ and
\(\sum_i (\operatorname{diam}U_i)^s<\eta\). Set $t>s/\alpha$ and fix an arbitrary $\delta'>0$. Then choose $\delta\le (\,\delta'/L\,)^{1/\alpha}$. For the cover above, we then have by Hölder continuity,
$\operatorname{diam}\big(f(U_i)\big)\ \le\ L\,\operatorname{diam}(U_i)^\alpha\ \le\ \delta'\,,
$
so $\{f(U_i)\}_i$ is a $\delta'$–cover of $f(E)$. Hence
\[
 \Hd^t_{\delta'}\big(f(E)\big)\ \le\ \sum_i \big(\operatorname{diam} f(U_i)\big)^t
 \ \le\ L^t \sum_i \big(\operatorname{diam}U_i\big)^{\alpha t}
 \ \le\ L^t \sum_i \big(\operatorname{diam}U_i\big)^{s}
 \ <\ L^t\,\eta,
\]
where we used $\alpha t>s$ in the second-to-last  inequality. Since $\eta>0$ was arbitrary, $\Hd^t_{\delta'}(f(E))=0$ for every $\delta'>0$, and therefore
$
 \Hd^t\big(f(E)\big)\ =\ \lim_{\delta'\downarrow 0}\Hd^t_{\delta'}\big(f(E)\big)\ =\ 0.
$
As this holds for all $t>s/\alpha$, we obtain $\dim_H(f(E))\le s/\alpha$. Letting $s\downarrow\dimH(E)$ concludes the proof.
\end{proof}
A similar result for the so-called \textit{Minkowski dimension} can be found in {\citet[Prop.~2.4]{hochman2014lectures}}. Lemma \ref{lem:holder-hausdorff} can be used to show the layer-wise monotonicity of the Hausdorff dimension. Hence, $\dimH$ cannot increase over the layers of any Lipschitz neural network.
\begin{theorem}[Layer-wise monotonicity of Hausdorff dimension]\label{thm:mono-hausdorff}
Let $f_1,\ldots,f_L$ be Lipschitz maps and set $\mu_\ell=(f_\ell)_\#\mu_{\ell-1}$.
Then, for each $\ell \in \{1, \ldots, L\}$,
\[
\dimH\!\big(\supp\mu_{\ell}\big)\ \le\ \dimH\!\big(\supp\mu_{\ell-1}\big).
\]
If $f_\ell$ is only $(L_\ell,\alpha_\ell)$-Hölder, then
$$\dimH(\supp\mu_\ell)\le \alpha_\ell^{-1}\dimH(\supp\mu_{\ell-1}).$$
\end{theorem}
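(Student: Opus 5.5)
The plan is to reduce the statement to Lemma~\ref{lem:holder-hausdorff}. The key step is to identify the support of the pushforward measure in terms of the image of the previous support. Write $S_{\ell-1}:=\supp\mu_{\ell-1}$. If I can show $\supp\mu_\ell\subseteq f_\ell(S_{\ell-1})$, then monotonicity of $\dimH$ under inclusion and Lemma~\ref{lem:holder-hausdorff} give
\[
\dimH(\supp\mu_\ell)\le\dimH\big(f_\ell(S_{\ell-1})\big)\le\alpha_\ell^{-1}\dimH(S_{\ell-1}).
\]
The Lipschitz case is the special case $\alpha_\ell=1$.

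\emph{Step 1: the support of the pushforward.} Since $\mu_{\ell-1}(S_{\ell-1})=1$ and $S_{\ell-1}\subseteq f_\ell^{-1}(f_\ell(S_{\ell-1}))$, we get $\mu_\ell(\overline{f_\ell(S_{\ell-1})})=1$. Because $\supp\mu_\ell$ is the smallest closed set of full measure, $\supp\mu_\ell\subseteq\overline{f_\ell(S_{\ell-1})}$.

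Conversely, take $y=f_\ell(x)$ with $x\in S_{\ell-1}$. By continuity, every ball around $y$ has a preimage containing an open neighbourhood of $x$, and that neighbourhood has positive $\mu_{\ell-1}$-mass. Hence $f_\ell(S_{\ell-1})\subseteq\supp\mu_\ell$, and altogether $\supp\mu_\ell=\overline{f_\ell(S_{\ell-1})}$.

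\emph{Step 2: removing the closure (main obstacle).} The closure is the one genuinely delicate point, because taking closures can increase Hausdorff dimension. A Lipschitz curve of infinite length in $\R^2$ can have dense image in a square, so the image has dimension at most $1$ while its closure has dimension $2$. I would therefore work under the assumption that $\mu_0$ has compact support. This is natural given the bounded-input discussion in \cref{app:NN_Lipschitz}.

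Under that assumption, induction on $\ell$ shows each $S_{\ell-1}$ is compact. The continuous image $f_\ell(S_{\ell-1})$ is then compact, hence closed, so by Step 1 $\supp\mu_\ell=f_\ell(S_{\ell-1})$ is itself compact. This simultaneously removes the closure and propagates compactness to the next layer.

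If one prefers to avoid compactness altogether, the alternative is to state the result for the full-measure sets $E_\ell:=f_\ell\circ\cdots\circ f_1(S_0)$ instead of the supports. For these sets Lemma~\ref{lem:holder-hausdorff} applies directly with no closure issue. I would remark that this is the intended reading in the non-compact case.

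\emph{Step 3: conclusion.} Combining Step 2 with Lemma~\ref{lem:holder-hausdorff} gives both inequalities for each layer $\ell$. Chaining them over $\ell=1,\dots,L$ gives the layer-wise non-increase of $\dimH$ for Lipschitz networks.
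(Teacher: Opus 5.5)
Your route is the paper's: its one-line proof applies Lemma~\ref{lem:holder-hausdorff} with $E=\supp\mu_{\ell-1}$ and $f=f_\ell$, and simply asserts $f_\ell(\supp\mu_{\ell-1})=\supp\mu_\ell$. Your Step~1 shows that in general only $\supp\mu_\ell=\overline{f_\ell(\supp\mu_{\ell-1})}$ holds. The closure issue you raise in Step~2 is real, and in fact it makes the theorem false as stated.

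A one-dimensional counterexample: let $\mu_0=\sum_{k\ge1}2^{-k}\delta_k$, so $\supp\mu_0=\{1,2,\dots\}$ and $\dimH(\supp\mu_0)=0$. Let $(r_k)_{k\ge1}$ enumerate $\mathbb{Q}\cap[0,1]$, and let $f_1$ be the piecewise-linear interpolation of $k\mapsto r_k$, constant on $(-\infty,1]$. This map is $1$-Lipschitz because $|r_{k+1}-r_k|\le 1$. Then $\mu_1=\sum_k 2^{-k}\delta_{r_k}$ has $\supp\mu_1=[0,1]$, of dimension $1$. Your dense Lipschitz curve, pushed forward from a Gaussian on $\R$, gives $2>1$ in the same way.

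So your extra hypothesis is a necessary repair, not a convenience. With $\supp\mu_0$ compact, your induction correctly shows that each $\supp\mu_\ell=f_\ell(\supp\mu_{\ell-1})$ is compact, and the lemma then gives both the Lipschitz and Hölder inequalities. Your alternative with $E_\ell=f_\ell\circ\cdots\circ f_1(\supp\mu_0)$ also works without compactness. These sets are $\sigma$-compact, hence Borel, so ``full measure'' makes sense.

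Your argument is correct for the corrected statement, and it fills exactly the step that the paper's proof skips.
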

\begin{proof}
Consider Lemma~\ref{lem:holder-hausdorff} 
with $E=\supp\mu_{\ell-1}$ and $f=f_\ell$, and note
$f_\ell(\supp\mu_{\ell-1})=\supp\mu_\ell$.
\end{proof}

\subsection{Pointwise Dimension}\label{app:def_point_dim}

We begin by defining the concept of \textit{pointwise dimension} that was first introduced by \cite{young1982dimension} more formally. For a measure $\mu$, its \textit{upper} and \textit{lower pointwise dimension} at point $x$ are
\[
\overline d_\mu(x)=\limsup_{r\downarrow 0}\frac{\log \mu(\ball(x,r))}{\log r},
\quad
\underline d_\mu(x)=\liminf_{r\downarrow 0}\frac{\log \mu(\ball(x,r))}{\log r},
\]
where $\ball(x,r)$ corresponds to a ball with radius $r$ that is centered around the point $x$.
When the upper and lower limit agree, i.e.\ $\overline d_\mu(x) = \underline d_\mu(x)$, it is also called \textit{pointwise} (or \textit{local Hausdorff}) \textit{dimension} and is denoted by $d_\mu(x)$. Note that the pointwise dimension is defined for a single point instead of the entire dataset, which is why it is sometimes considered a local instead of a global dimension. Nonetheless, it can be considered at multiple points $x$. In particular, $d_\mu(x)$ is said to be \textit{exact dimensional} in case it exists and is $\mu$-a.s.\ independent of the point $x$ (hence, $d_\mu(x)$ equals the same constant for all $x$ $\mu$-a.s.). In this case, it is sometimes denoted by $d_\mu$ {\citep[Def.~3.9]{hochman2014lectures}}.

\subsubsection{Proof of \cref{lem:pointwise-holder}}

\begin{proof}
For small $\rho>0$, $(L,\alpha)$-Hölder gives $f(\ball(x,\rho))\subseteq \ball(y,L\rho^\alpha)$. Setting $r=L\rho^\alpha$,
\[
\nu(\ball(y,r))
=\mu\!\left(f^{-1}(\ball(y,r))\right)
\ \ge\ \mu\! \big(\ball(x,\rho)\big).
\]
Taking logs and dividing by $\log r<0$ (fulfilled for a sufficiently small $r$) reverses the inequality:
\[
\frac{\log \nu(\ball(y,r))}{\log r}
\ \le\
\frac{\log \mu(\ball(x,\rho))}{\log r}
=
\frac{\log \mu(\ball(x,\rho))}{\alpha\log \rho+\log L}. 
\]
As $r\downarrow0$, we have $\rho\downarrow0$ and $\log \rho\to-\infty$, so the additive constant $\log L$ is negligible in $\limsup$/$\liminf$. Hence, for the upper pointwise dimension (same logic applies to the lower pointwise dimension), we get 
\[
\overline d_\nu(y)
=\limsup_{r\downarrow 0}\frac{\log \nu(B(y,r))}{\log r}
\;\le\;
\limsup_{\rho\downarrow 0}\frac{\log \mu(B(x,\rho))}{\alpha\log \rho+\log L}
=\frac{1}{\alpha}\,\limsup_{\rho\downarrow 0} \frac{\log \mu(B(x,\rho))}{\log \rho}
=\frac{1}{\alpha}\,\overline d_\mu(x).
\]
For Lipschitz maps ($\alpha=1$), we obtain $\overline d_\nu(y)\le\overline d_\mu(x)$ and $\underline d_\nu(y)\le\underline d_\mu(x)$, which concludes the proof.
\end{proof}

\subsection{Invariance of Hausdorff and Pointwise Dimensions Under Bi-Lipschitz Mappings}\label{app:bilipschitz}

The first part of the following result is another classic result from fractal geometry that can be found in  \citet[Cor.~2.4 (b)]{falconer2013fractal}. The second part that is about the (upper/lower) pointwise dimension is discussed in \citet{hidaka2013estimation}. 

\begin{proposition}[Invariance under Bi-Lipschitz mappings]\label{prop:bilip}
If $f$ is bi-Lipschitz on $E\subset\R^n$, then $\dimH(f(E))=\dimH(E)$. 
If moreover $\nu=f_\#\mu$ and $f$ is bi-Lipschitz on $\supp\mu$, then $\overline d_\nu(f(x))=\overline d_\mu(x)$ and $\underline d_\nu(f(x))=\underline d_\mu(x)$ for $\mu$-a.e.\ $x$.
\end{proposition}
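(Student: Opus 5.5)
The plan is to apply the two monotonicity lemmas twice, once to $f$ and once to its inverse. Write bi-Lipschitz on $E$ as
\[
c\|x-y\|\le\|f(x)-f(y)\|\le C\|x-y\|
\]
for $x,y\in E$, with $0<c\le C$. Then $f|_E$ is injective, and $g:=f^{-1}:f(E)\to E$ is $c^{-1}$-Lipschitz on $f(E)$.

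\textbf{Hausdorff part.} Lemma~\ref{lem:holder-hausdorff} is stated for maps on all of $\R^n$, but its proof only uses the Lipschitz bound on the set being covered. After intersecting each $U_i$ with $E$, the estimate $\operatorname{diam} f(U_i\cap E)\le C\operatorname{diam} U_i$ holds verbatim. Alternatively, extend $f|_E$ to a Lipschitz map on $\R^n$ via Kirszbraun (or McShane componentwise, at the cost of a $\sqrt m$ factor). This gives $\dimH(f(E))\le\dimH(E)$. Applying the same argument to $g$ on $f(E)$, with $g(f(E))=E$, gives the reverse inequality, hence equality.

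\textbf{Pointwise part.} The upper bound $\overline d_\nu(f(x))\le\overline d_\mu(x)$, and likewise for the lower dimension, is Lemma~\ref{lem:pointwise-holder}. If $f$ is only Lipschitz on $\supp\mu$, I would rerun the lemma's proof with $B(x,\rho)\cap\supp\mu$. This costs nothing, since
\[
\mu(B(x,\rho))=\mu(B(x,\rho)\cap\supp\mu).
\]
For the reverse inequality, the plan is to prove directly that
\[
\nu(B(f(x),r))\le\mu(B(x,r/c)).
\]
Indeed, $f^{-1}(B(f(x),r))\cap\supp\mu$ consists of points $x'\in\supp\mu$ with $\|f(x')-f(x)\|<r$. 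The lower bi-Lipschitz bound forces $\|x'-x\|<r/c$, provided $x\in\supp\mu$, which holds for $\mu$-a.e.\ $x$. This is where the a.e.\ qualifier enters. Since $\mu$ gives full mass to $\supp\mu$, the inequality follows.

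Taking logarithms and dividing by $\log r<0$ flips the inequality. Writing $\rho=r/c$, so that $\log r=\log\rho+\log c$, the additive constant vanishes in the limsup and liminf exactly as in the proof of Lemma~\ref{lem:pointwise-holder}. This yields $\overline d_\nu(f(x))\ge\overline d_\mu(x)$ and $\underline d_\nu(f(x))\ge\underline d_\mu(x)$, and combining with the first direction gives equality.

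\textbf{Main obstacle.} The delicate point is bookkeeping rather than analysis: the bi-Lipschitz hypothesis holds only on $E$ (respectively $\supp\mu$), not globally. I must check that every preimage or ball used lies, up to a null set, inside that set. For the measure part this is handled by intersecting with $\supp\mu$, which has full $\mu$-measure. For the Hausdorff part it is handled by restricting covers to $E$, or by the Lipschitz-extension remark.
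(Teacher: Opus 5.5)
Your proof is correct and follows the paper's strategy: run the Lipschitz monotonicity results in both directions, once for $f$ and once for $f^{-1}$. The one difference is the reverse pointwise inequality. Instead of invoking Lemma~\ref{lem:pointwise-holder} for $f^{-1}$, you unroll that lemma's argument into the direct bound $\nu(B(f(x),r))\le\mu(B(x,r/c))$ for $x\in\supp\mu$. This spares you from checking $(f^{-1})_\#\nu=\mu$, extending $f^{-1}$ off $f(\supp\mu)$, and converting a $\nu$-a.e.\ statement into a $\mu$-a.e.\ one; it also makes explicit the restricted-domain bookkeeping that the paper's one-line proof leaves implicit.
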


\begin{proof}
Applying Lemma~\ref{lem:holder-hausdorff} to the Lipschitz function $f: E \to \mathbb{R}^m$ yields $\dimH(f(E))\leq\dimH(E)$. Due to the bi-Lipschitzness, $f^{-1}: f(E) \to E$ is also Lipschitz. Hence, by Lemma~\ref{lem:holder-hausdorff} we get that $\dimH(E)\leq\dimH(f(E))$. This proofs  $\dimH(E) =\dimH(f(E))$ for bi-Lipschitz $f$. For the (upper/lower) pointwise dimensions, apply Lemma~\ref{lem:pointwise-holder} to both $f$ and $f^{-1}$ using the same logic from above.
\end{proof}

\subsection{MLE and TwoNN target the pointwise dimension}\label{app:mle_2nn_targets}

Let $\tilde{M}$ be a $d$-dimensional manifold and let $Y_{1},\dots,Y_{n}\in \tilde{M}$ be i.i.d.\ with probability
measure $\tilde{\mu}$. For simplicity, we use a slightly different notation in the following derivation (compared to other sections), with $d$ and $D$ ($d\!\ll\!D$) denoting the dimension of the manifold and the ambient space, respectively. The observed sample $\{X_j\}_{j=1}^{n}$ is its (smooth) embedding
$X_{j}:=g(Y_{j})\in\R^{D}$, with a continuous and sufficiently smooth mapping $g$ as in \citet{levina2004maximum}. Let $M := g(\tilde{M})$ denote the embedded data manifold, and $\mu$ its induced probability measure.

\paragraph{Local Model (Homogeneous PPP)}
Fix a point $x\in M$ for which the pointwise dimension exists, $d_{\mu}(x)=d$.
Assume that in a neighborhood of $x$, $\mu$ has a density
$\kappa$ with respect to the Riemannian volume on $M$, with $\kappa$ continuous at $x$ and $0<\kappa(x)<\infty$.
Under these conditions, the standard
Binomial-to-Poisson coupling \citep{penrose2003weak, penrose2013limit} implies that, at sufficiently small scales around $x$, the sample $\{X_j\}_{j=1}^{n}$ can be approximated by a \emph{homogeneous Poisson point process} (PPP) in the tangent space $\mathbb{R}^d$ with \emph{intensity} $\lambda_n=n\,\kappa(x)$, meaning the expected number of points in a set equals $\lambda_n$ times its $d$-dimensional volume. In particular, for small $r>0$, the count
$
N(r,x):= \sum_{j=1}^n \mathbf 1\{X_j\in \ball(x,r)\}
$
is well-approximated by
\mbox{$N(r,x) \sim \mathrm{Poisson}\!\big(\lambda_n\,\omega_d\, r^{d}\big),$}
where $\omega_d$ is the volume of the unit ball in $\mathbb R^d$.

\paragraph{Levina-Bickel MLE}
Let $T_i(x)$ be the distance from $x$ to its $i$-th nearest neighbor. Under the local PPP model, conditional on the distance $T_k(x)$ to the $k$-th neighbor, the ratios $U_i = T_i(x) / T_k(x)$ for $i=1, \dots, k-1$ are the order statistics of $k-1$ i.i.d. random variables drawn from a distribution with CDF $\mathfrak F(u)=u^d$ and corresponding PDF $\mathfrak f(u|d)=du^{d-1}$ for $u\in[0,1]$. The joint log-likelihood of these order statistics is $\ell(d)= C + \sum_{i=1}^{k-1} \log(d{u_i}^{d-1})$, where $C$ is a constant independent of $d$. Maximizing $\ell(d)$ w.r.t.\ $d$, yields the MLE from \cite{levina2004maximum}:
\[
\widehat d_{\mathrm{MLE}}(x)
\ =\ \left[\frac{1}{k-1}\sum_{i=1}^{k-1}\log\frac{T_k(x)}{T_i(x)}\right]^{-1}.
\]
As $n\to\infty$, $k\to\infty$ and $k/n\to0$ so that $r=T_k(x)\downarrow0$, the PPP small-scale approximation becomes exact in the limit. Because $\widehat d_{\mathrm{MLE}}(x)$ is asymptotically unbiased and its variance scales as $O(1/k)$ \citep{levina2004maximum}, it follows that $\widehat d_{\mathrm{MLE}}(x)\to d$ in probability. Note that by defining $M$ as a $d$-dimensional manifold, we assumed exact dimensionality ($d_{\mu}(x)=d$ for almost all $x \in M$). However, even in more general spaces with varying pointwise dimension, as the local PPP approximation becomes exact, we obtain the localized result that $\widehat d_{\mathrm{MLE}}(x)\to d_\mu(x)$ in probability.

\paragraph{TwoNN}
In the special case $k=2$, the ratio $\rho(x):=T_2(x)/T_1(x)\in[1,\infty)$ satisfies $\log\rho(x)\sim\mathrm{Exp}(d)$ and hence $\rho$ is Pareto$(d)$:
\[
\mathfrak f(\rho\,|\,d)=d\,\rho^{-(d+1)},\qquad \mathfrak F(\rho\,|\,d)=1-\rho^{-d}\quad(\rho\ge 1),
\]
see \citet[Eqs.\ (5) \& (6)]{facco2017estimating}.
Treating $\{\rho(x_j)\}$ as approximately independent gives the pseudo-log-likelihood
$\ell(d)=n\log d-(d+1)\sum_{j=1}^n\log\rho(x_j)$, whose maximizer is
\[
\widehat d_{\mathrm{TwoNN}}
=\left[\frac{1}{n}\sum_{j=1}^n \log\frac{T_2(x_j)}{T_1(x_j)}\right]^{-1}.
\]
Alternatively, the estimation can be based on linear regression (both approaches are asymptotically equivalent). The regression form used in TwoNN follows from the Pareto distribution-based identity
\[
-\log\!\bigl(1-\mathfrak F(\rho\,|\,d)\bigr)=d\,\log \rho,
\]
see \citet[Eqs.\ (7)]{facco2017estimating}. So fitting a straight line (passing through the origin) to $\{\bigl(\log\rho_j,\ -\log(1-\widehat{\mathfrak{F}}_n(\rho_j))\bigr)\}_{j=1}^{n}$ estimates the slope $d$ \citep{facco2017estimating}. Under exact dimensionality we have that $d_\mu(x)=d$ for $\mu$-a.e.\ interior $x$. In that case, as $n\to\infty$ and $r=T_2(x)\downarrow 0$, the PPP small-scale approximation becomes exact, justifying $\log\rho\sim\mathrm{Exp}(d)$. Since $\mathbb{E}[\log\rho]=d^{-1}$, applying the the weak law of large numbers to the sample mean, followed by the continuous mapping theorem, yields $\widehat d_{\mathrm{TwoNN}}\to d$ in probability.

\subsection{Exact IDs of LLM Embeddings \& Representations}\label{app:exact-id-llms}

As depicted in \cref{fig:word_emd_llms}, each token in a prompt is mapped to a specific point in the latent space of the LLM embeddings. While these embeddings vary between LLMs, and the number of unique points that can be reached in the latent space of the embeddings depends on the size of the vocabulary of the respective LLM, this number is \textit{finite} for all LLMs. Using the fact that the ID of any finite or countable set of points is zero, the lemma below shows that the ID of LLM embeddings are also zero in a strict topological sense. Considering prompts with finite length, the lemma extends this result to the ID of hidden-layer representations in LLMs. 

\begin{lemma}[ID of LLM representations]\label{lem:zero-dim-llm}

Let $\mathcal V$ be a finite vocabulary, and let $\bigcup_{n \ge 1} \mathcal{V}^n$ be the countable set of all prompts, where each prompt $x$ has an arbitrary finite length $|x|=n \in\mathbb{N}$.
Let $e:\mathcal V\to\mathbb R^{d_0}$ be the token embedding map, then the token embedding set $e(\mathcal V)$ is finite as $\mathcal V$ is finite. Further, denote $S_0 = \{ (x, t) : x \in \bigcup_{n \ge 1} \mathcal{V}^n, t \in \{1, \dots, |x|\} \}$ to be the set of all prompts and positions.
Consider a deterministic, measurable LLM whose $\ell$-th layer representation at position $t$ is given by
\[
F_\ell:\ \bigcup_{n\ge1}\mathcal V^n \times \mathbb N \ \longrightarrow\ \mathbb R^{d_\ell},\qquad
(x,t)\mapsto F_\ell(x,t),
\]
defined for $t\in\{1,\dots,|x|\}$. For $\ell\ge1$ define the set of attainable representations in layer $\ell$ by
\[
S_\ell \;:=\; \bigl\{F_\ell(x,t): x\in \textstyle\bigcup_{n\ge1}\mathcal V^n,\ t\in\{1,\dots,|x|\}\bigr\}.
\]
Then for every $\ell\in\{1,\dots,L\}$:
\begin{enumerate}
\item (Hausdorff dimension) $S_\ell$ is countable and $\dim_H(S_\ell)=0$.
\item (Pointwise dimension) If $\mu_0$ is any probability measure supported on $S_0$ and
$\mu_\ell=(F_\ell)_\#\mu_{0}$, then $\mu_\ell$ is purely atomic and, for $\mu_\ell$-a.e.\ $y$,
\[
\underline d_{\mu_\ell}(y)=\overline d_{\mu_\ell}(y)=0.
\]
Hence each $\mu_\ell$ is exact-dimensional with $d_{\mu_\ell}(y)=0$ for $\mu_\ell$-a.e.\ $y$.
\end{enumerate}
\end{lemma}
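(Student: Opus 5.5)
The argument is essentially a counting argument followed by two elementary computations, one for each part of \cref{lem:zero-dim-llm}.

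\textbf{Countability.} $S_0$ is countable: it is a countable union over $n\ge1$ of the finite sets $\mathcal V^n\times\{1,\dots,n\}$. Since $S_\ell=F_\ell(S_0)$ is the image of a countable set under a map, it is also countable. This uses no continuity or Lipschitz property of $F_\ell$, only that it is a function. In particular the argument does not go through \cref{lem:holder-hausdorff}, although one could alternatively note that this lemma gives $\dimH(S_\ell)\le\dimH(S_0)$ in the embedded setting.

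\textbf{Hausdorff dimension.} For part 1 I will show directly that $\Hd^s(S_\ell)=0$ for every $s>0$. Enumerate $S_\ell=\{y_1,y_2,\dots\}$ and fix $\delta>0$ and $\varepsilon>0$. Cover each $y_i$ by a ball $U_i$ of diameter $\min\{\delta,(\varepsilon 2^{-i})^{1/s}\}$. Then
\[
\sum_i(\operatorname{diam}U_i)^s\le\sum_i \varepsilon 2^{-i}=\varepsilon,
\]
so $\Hd^s_\delta(S_\ell)\le\varepsilon$. Since $\varepsilon$ is arbitrary, $\Hd^s_\delta(S_\ell)=0$, and letting $\delta\downarrow0$ gives $\Hd^s(S_\ell)=0$. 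Hence $\dimH(S_\ell)\le s$ for every $s>0$. Combined with $\dimH\ge0$, this gives $\dimH(S_\ell)=0$. Equivalently, one can use countable stability together with the fact that a singleton has dimension zero.

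\textbf{Pointwise dimension.} For part 2, $\mu_0$ is supported on the countable set $S_0$, so $\mu_0=\sum_{s\in S_0}\mu_0(\{s\})\delta_s$. Pushing forward gives
\[
\mu_\ell=\sum_{y\in S_\ell}\Bigl(\sum_{s:F_\ell(s)=y}\mu_0(\{s\})\Bigr)\delta_y,
\]
which is purely atomic with $\mu_\ell(S_\ell)=1$. Measurability of $F_\ell$ is trivial here, since the domain is countable and can be given the discrete $\sigma$-algebra.

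Now define $A=\{y:\mu_\ell(\{y\})>0\}$. Then $\mu_\ell(A)=1$, so $\mu_\ell$-a.e.\ $y$ is an atom. For such $y$, set $p=\mu_\ell(\{y\})>0$. For every $r\in(0,1)$ we have $p\le\mu_\ell(\ball(y,r))\le1$, so $\log\mu_\ell(\ball(y,r))\in[\log p,0]$ is bounded. Dividing by $\log r\to-\infty$ sends the ratio to $0$, hence $\underline d_{\mu_\ell}(y)=\overline d_{\mu_\ell}(y)=0$. Since this value is the constant $0$ on a full-measure set, $\mu_\ell$ is exact-dimensional with $d_{\mu_\ell}=0$.

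\textbf{Main obstacle.} The mathematics is routine; the only delicate point is bookkeeping. One must be clear that the statement is about the image set $S_\ell$ and the pushforward measure, not about any continuity of $F_\ell$. One must also make sure that "a.e." is taken with respect to $\mu_\ell$, so that points of $S_\ell$ with zero mass, or points off the support, are harmlessly excluded. I would state explicitly that the value $0$ is attained on the atoms, which carry full mass.
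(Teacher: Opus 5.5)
Your proof is correct and follows the paper's argument. You use countability of $S_0$, hence of $S_\ell=F_\ell(S_0)$; then at atoms, which carry full mass, the bound $\mu_\ell(\ball(y,r))\ge\mu_\ell(\{y\})>0$ forces both pointwise dimensions to $0$. The only addition is that you verify $\Hd^s(S_\ell)=0$ for $s>0$ with an explicit $\varepsilon 2^{-i}$ covering, whereas the paper simply asserts that countable sets have Hausdorff dimension zero. Your aside invoking \cref{lem:holder-hausdorff} is best dropped, since $S_0$ is not a subset of a Euclidean space and $F_\ell$ is not assumed Lipschitz, but nothing in your argument depends on it.
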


\begin{proof}
The set $\bigcup_{n\ge1}\mathcal V^n$ is countable (countable union of finite sets), and for each $x$, the index set
$\{1,\dots,|x|\}$ is finite. Thus, the domain $S_0=\{(x,t)\}$ is countable, and its image $S_\ell$ under any function is countable.
Further, any countable subset of $\mathbb R^{d_\ell}$ has a Hausdorff dimension of zero. This proves the first claim.

For the second claim, note that $\mu_0$ is purely atomic as $S_0$ is countable. The pushforwards of purely atomic measures under any (measurable) map are again purely atomic and supported on $S_\ell$, which was shown to be countable above. For any point $y$ (an atom) with mass $c=\mu_\ell(\{y\})>0$, we have that $\mu_\ell(B(y,r))\ge c$ for any arbitrarily small $r>0$. Combining this with the fact that the (lower/upper) pointwise dimension is bounded below by zero, we get that
$$
0\le\lim_{r\downarrow0}\frac{\log \mu_\ell(B(y,r))}{\log r}
\le\lim_{r\downarrow0}\frac{\log c}{\log r}=0,
$$
which shows that both lower and upper pointwise dimensions are zero at $\mu_\ell$-a.e.\ $y$.
\end{proof}

\subsubsection{Conceptual difference between Text and Image Representations}\label{app:diff-img-text}

While it might seem natural to extend the above reasoning from text to image representations, the following discussion aims to clarify why \cref{lem:zero-dim-llm} cannot be transferred to image representations.
For token-based inputs, the key issue is that one cannot meaningfully and smoothly interpolate between different token embeddings. This is not just due to the finiteness of the input, but to the inherently discrete nature of words and subword units, which yields a finite and separated set of token embeddings.

In contrast, for images it is conceptually natural to think about smooth interpolation between inputs, reflecting that the underlying light spectrum is continuous in reality. Although this continuum is discretized when images are recorded and stored with finite precision, this is typically viewed as a technical approximation rather than a property of the underlying object. Thus, images can be viewed as forming a high-dimensional continuous manifold embedded in a discrete space, where the discreteness arises from digital representation rather than from the underlying object itself, unlike in the token-based case.

Therefore, the difference between images and text with respect to the manifold hypothesis stems from the fundamentally different nature of pixels and token embeddings. Conceptually, smooth interpolation is natural for pixels but not for tokens. \cref{lem:zero-dim-llm} is aimed at making this intuition mathematically precise.

\section{Additional Experiments and Experimental Details}\label{app:add_exp}

\paragraph{Computing Environment} Experiments involving the extraction of layer-wise representations for the LLMs and ViTs were performed on an NVIDIA Tesla T4 GPU. The CNN experiments can additionally be run using Apple's Metal Performance Shaders (MPS) backend on a MacBook with at least 16\,GB of RAM. In our experimental setup, extracting layer-wise representations for a single model and computing each metric required less than one hour. The code is available at \url{https://github.com/rickmer-schulte/rethinking-neural-id}.

\subsection{Layer-wise Analysis}\label{app:layerwise_analysis}

\subsubsection{Layer-wise Analysis of LLM Representations}

For the layer-wise neural representation analysis of LLMs, we follow along the investigation of \citet{cheng2025emergence}.
Similar to their analysis, we base ID-estimation on 10k prompts from the popular wikitext dataset \citep{merity2017pointer}. For each prompt, we extract layer-wise representations at the final position as described in \cref{sec:id_llms}. We do this for the pretrained models \mbox{Llama-3.1-8B} \citep{grattafiori2024llama}, \mbox{Mistral-7B-v0.3} \citep{jiang2023mistral7b}, and \mbox{Pythia-6.9B} \citep{biderman2023pythia}. Pretrained weights for each of these models are obtained from Hugging Face. 

Similar to several other LLM-based analyses, ID estimation is based on the Gride ID estimator \citep{denti2022gride} using the \textit{DADApy} implementation \citep{dadapy2022} with default parameter settings, such as $\text{range\_max}=64$. The latter indicates that the $64^{th}$ NN distance is the maximum NN distance that is involved in the ID estimation.

\subsubsection{Layer-wise Analysis of CNN Representations}
For the layer-wise neural representation analysis of convolutional neural networks (CNNs), we follow along the investigation of \citet{ansuini2019intrinsic}. We consider several classic model architectures such as \textit{Alexnet} \citep{krizhevsky2012alexnet}, \textit{VGG} \citep{simonyan2014vgg}, \textit{ResNet} \citep{he2016resnet} with pretrained weights (pretrained on ImageNet \citep{deng2009imagenet}) obtained from the PyTorch library \textit{torchvision} \citep{paszke2019pytorch}. The addition of ``(BN)'' for the VGGs of \cref{fig:id_fig} indicates that models incorporate Batch Normalization layers. 

The layer-wise neural representations are obtained by passing a set of images through each pretrained model and extracting the corresponding representations from the layers. Each dataset consists of 500 samples from the seven largest categories of ImageNet. Further details can be found in \citet{ansuini2019intrinsic}. In \cref{fig:id_fig,fig:id_classes,fig:id_vs_emb_dim,fig:nn_dist_cnns,fig:avg_cos_sim_cnns,fig:avg_length_reps_cnns,fig:id_vs_entropy}, the point estimates correspond to the mean of the respective estimates on the seven datasets, and the error bars to the corresponding standard deviations. The estimated intrinsic dimensions are obtained via the TwoNN ID estimator \citep{facco2017estimating}. In each of these plots, the x-axis denotes the relative instead of the absolute depth of each model layer to facilitate visual comparison between models with varying numbers of layers. 

The CNN-based layer-wise analysis of representations concerning other metrics than the ID is performed along four representative examples of the CNNs in \cref{fig:id_fig}. In the following sections, we conduct experiments analogous to the ones found for LLMs in the main text, along the example of the four pretrained CNN models \textit{Alexnet}, \textit{VGG-16}, \textit{ResNet-18}, and \textit{ResNet-34}.

\subsubsection{Layer-wise Analysis of ViT Representations}

For the analysis of vision transformers (ViTs), we extract layer-wise representations from a standard ViT (google/vit-base-patch16-224) \citep{wu2020visual} and two DINOv3 models of different sizes (facebook/dinov3-vitb16-pretrain-lvd1689m; facebook/dinov3-vitl16-pretrain-lvd1689m) \citep{simeoni2025dinov3} based on 5k images from ImageNet and conducted a layer-wise analysis analogous to our previous investigation. In the spirit of our previous transformer-based analysis, we extract the layer-wise representation of the CLS token for each image, given that all information necessary for prediction tasks is encoded in it. Pretrained weights for each of the ViT models are obtained from Hugging Face.

\subsection{Metrics}\label{app:metrics}

\paragraph{Von Neumann Entropy} In \cref{sec:other_metrics} we considered the \textit{von Neumann entropy} of the distribution of eigenvalues (of the Gram matrix) of layer-wise representations.
The entropy can be computed layer-wise. Let $Z\in \mathbb{R}^{n\times d}$ be a latent representation in a hidden layer based on $n$ observations with $d$-dimensional neural representation. First, define the corresponding Gram matrix 
$Q = ZZ^\top \in \mathbb{R}^{n \times n}$. Since Q is symmetric positive semidefinite, we can normalize it by its trace to obtain 
$$\tilde{Q}=\frac{Q}{\operatorname{tr}(Q)},$$
which satisfies $\operatorname{tr}(\tilde{Q})=1$. The eigenvalues $p_1, \ldots, p_n$ of $\tilde{Q}$ are given by
$$
    p_i = \frac{\lambda_i(Q)}{\operatorname{tr}(Q)}, \quad\quad i=1, \ldots, n,
$$
where $\lambda_1(Q),\ldots,\lambda_n(Q)$ denote the eigenvalues of $Q$.
Then the \textit{von Neumann entropy} of $\tilde{Q}$ is defined as $
  S(\tilde{Q}) = -\sum_{i=1}^n p_i
  \log\!\left( {p_i} \right)$, with the convention that $0\,\log (0)=0$. Because only the nonzero eigenvalues contribute, this can equivalently be written as
\begin{equation}
  S(\tilde{Q}) = -\sum_{i=1}^r p_i
  \log\!\left( {p_i} \right),
  \label{eq:vN_entropy}
\end{equation}
where $r$ denotes the rank of $Q$ with $r = \operatorname{rank}(Q) \leq \min(n,d)$. Note that this is a special type of so-called \textit{matrix-based entropy} 
\citep{giraldo2014measures} that was also considered in the analysis of \citet{skean2025layer}. Considering the normalized singular values 
instead of the eigenvalues 
and exponentiating the corresponding results in \eqref{eq:vN_entropy}, one obtains the so-called \textit{effective rank} of $Z$ \citep{roy2007effective}. In our experiments, we computed the von Neumann entropy based on mean-centered representations to capture the spread of the representations in latent space independent of translations.
In general, point estimates correspond to the mean of the entropy estimates, and the error bars or shaded areas are based on the standard deviations in the entropy-related plots.

\paragraph{Other Metrics} Our experiments also cover a layer-wise analysis of the length and pairwise cosine similarity of representations. The length (or size) of the representations is measured by $L_2$-distances to the origin of the latent space for each representation. The cosine similarity is computed between and averaged over pairs of layer-wise representations, measuring how closely aligned the representations are over several hidden layers of the model. Given the high number of samples and, therefore extremely high number of pairs in the case of the LLM-based analysis, we adapt a block-wise analysis of pairs to be more memory efficient.

\subsection{Intrinsic Dimension Estimator Analysis}\label{app:id_est_analysis}
In \cref{fig:id_fig2}, we investigate the accuracy of the two ID estimators, TwoNN and MLE. In order to obtain datasets with known intrinsic dimension, we sample 5k data points uniformly distributed on a $d_{\mathcal{M}}$-hyperball with varying true ID $d_{\mathcal{M}}$. For each true intrinsic dimension, we sample 20 datasets and estimate the IDs via MLE and TwoNN on each of these. 

\cref{fig:id_fig2} and \cref{fig:id_bias_multimle} depict both the average over these 20 ID estimates (lines) and the related 95\% CI.
For the sampling and subsequent ID estimation, we use the \textit{scikit-dimension} library \citep{bac2021scikit}. \cref{fig:id_fig2} shows a strong negative bias for the two estimators that is growing with increasing true intrinsic dimension. While \cref{fig:id_fig2} uses MLE with $k=20$, \cref{fig:id_bias_multimle} shows that the negative bias is persistent also for other choices of nearest neighbors $k$. 

Similar to \cref{fig:id_vs_emb_dim}, we also demonstrate in a controlled study setup (with known ID and datasets sampled as described above) that the ambient space dimension does not seem to have strong effects on the TwoNN and MLE estimates. For a fixed true ID of $50$, the two estimators still exhibit a negative bias, but are largely invariant to changes in the size of the ambient space dimension, as can be seen in \cref{fig:id_vs_amb_dim}.

\begin{figure}[htbp]
    \centering
    \includegraphics[width=0.54\textwidth]{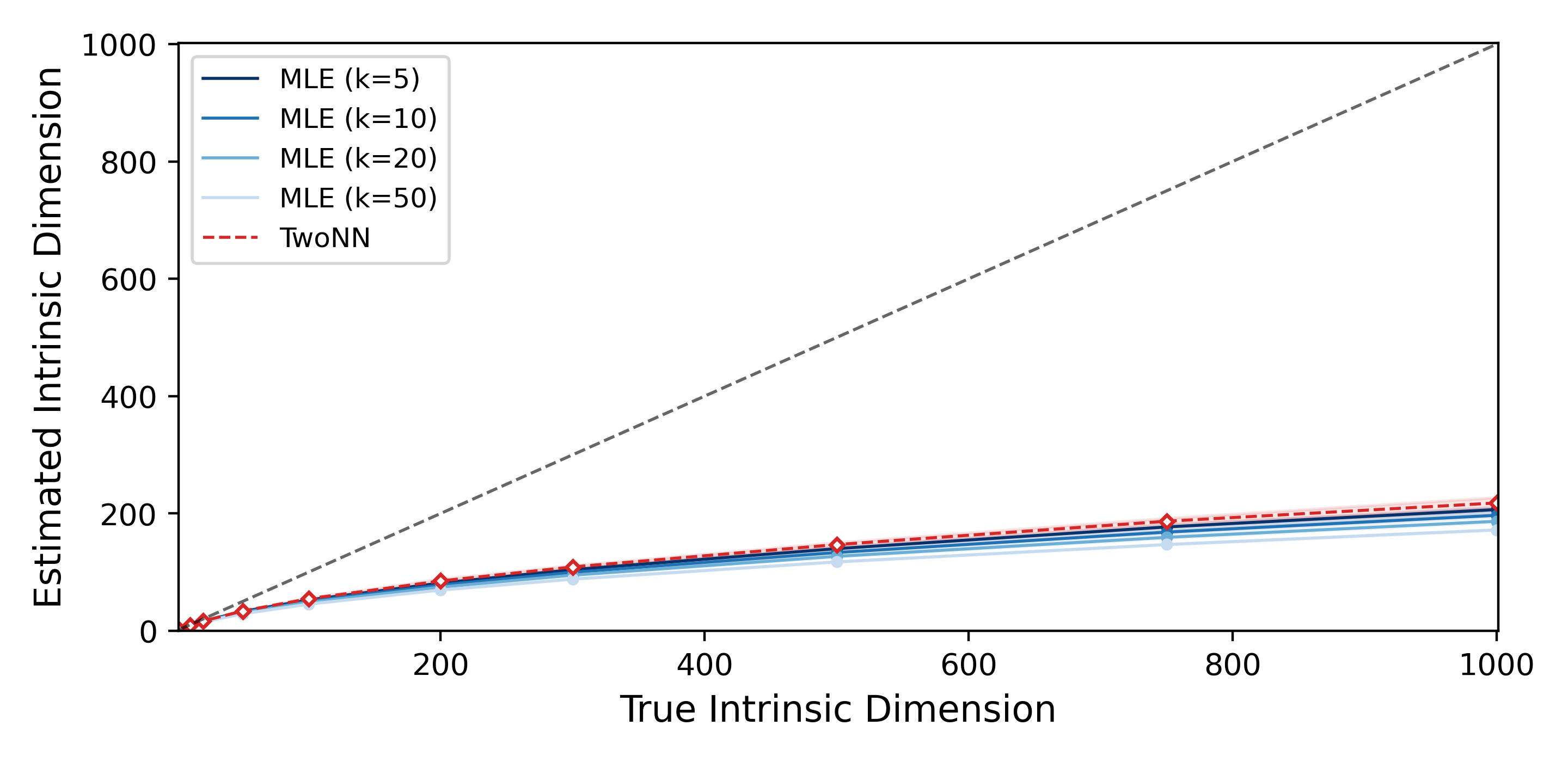}
    \caption{Estimated vs.\ True ID: Estimated IDs using TwoNN and MLE (different $k$) of datasets with varying true ID. Each dataset consists of 5k data points uniformly distributed on a $d_{\mathcal{M}}$-hyperball with varying true ID $d_{\mathcal{M}}$. 95\% CI are computed based on 20 ID estimates. Both estimators exhibit strong negative bias with increasing $d_{\mathcal{M}}$.}
    \label{fig:id_bias_multimle}
\end{figure}

\begin{figure}[htbp]
    \centering
    \includegraphics[width=0.5\textwidth]{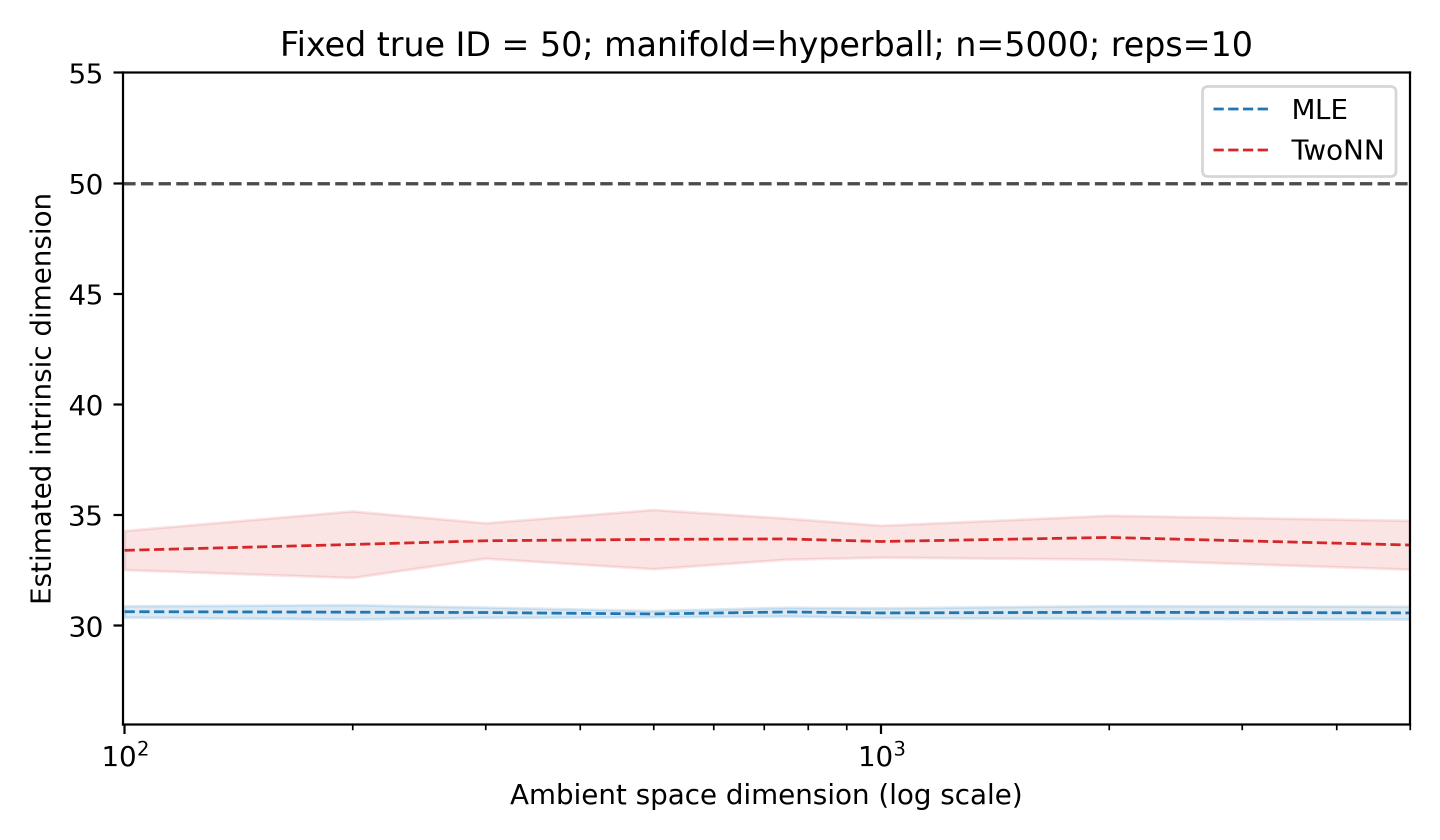}
    \caption{Estimated ID vs.\ Ambient Space Dim.: Estimated IDs using TwoNN and MLE ($k=20$) of datasets with varying ambient space dimension $d$. Each dataset consists of 5k data points uniformly distributed on a $d_{\mathcal{M}}$-hyperball with fixed to $d_{\mathcal{M}}=50$. 95\% CI are computed based on 10 ID estimates. Ambient space dimension does not strongly impact ID estimates.}
    \label{fig:id_vs_amb_dim}
\end{figure}

\subsection{ID vs.\ Ambient Space Dimension}\label{app:id_amb_analysis}
In \cref{sec:other_metrics}, we explored what might be underlying factors that drive the commonly found layer-wise ID patterns. Along with this analysis, we also conduct an experiment comparing the layer-wise ID patterns with the layer-wise embedding dimension in different pretrained models. The result in \cref{fig:id_vs_emb_dim} shows that the two patterns are relatively different, indicating that the ambient space dimension itself does not seem to be the core driving factor for the commonly found ID patterns.

\begin{figure}[htbp]
    \centering
    \includegraphics[width=1\textwidth]{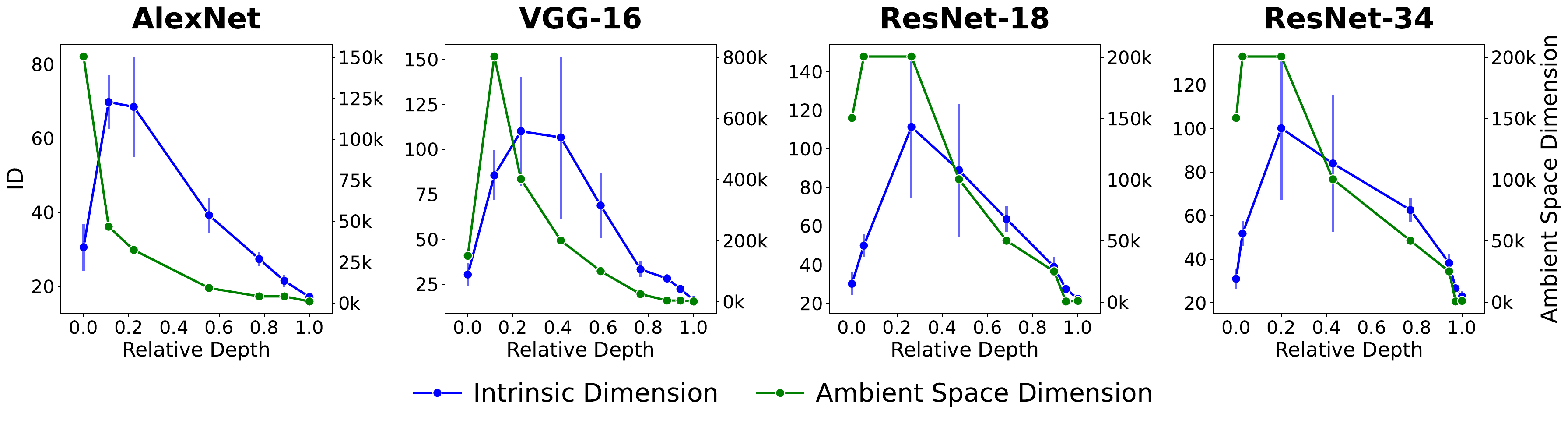}
    \caption{Layer-wise comparison of estimated intrinsic dimensions (left y-axis) vs.\ ambient space dimensions (right y-axis) of neural representations from different pre-trained convolutional architectures. The x-axis shows the relative depth of the model layers. The ambient space dimension corresponds to the width of each layer.}
    \label{fig:id_vs_emb_dim}
\end{figure}

\subsection{Class-specific Intrinsic Dimension Estimates}\label{app:IDs_per_class}

Analogous to \cref{fig:id_classes}, we also plot the class-specific ID estimates for the other pre-trained convolutional architectures in \cref{fig:id_per_class_all_models}. The estimated IDs in each plot are computed separately for the layer-wise representations corresponding to images from the seven largest classes of the ImageNet \citep{deng2009imagenet} dataset. Similar to \cref{fig:id_classes}, also the class-specific ID patterns in case of the other convolutional architectures show an increase in estimated IDs over the layers. ID estimates for layer-wise representations of the vision models are again obtained using the TwoNN estimator.

\begin{figure}[!htb]
    \centering
    \includegraphics[width=1\textwidth]{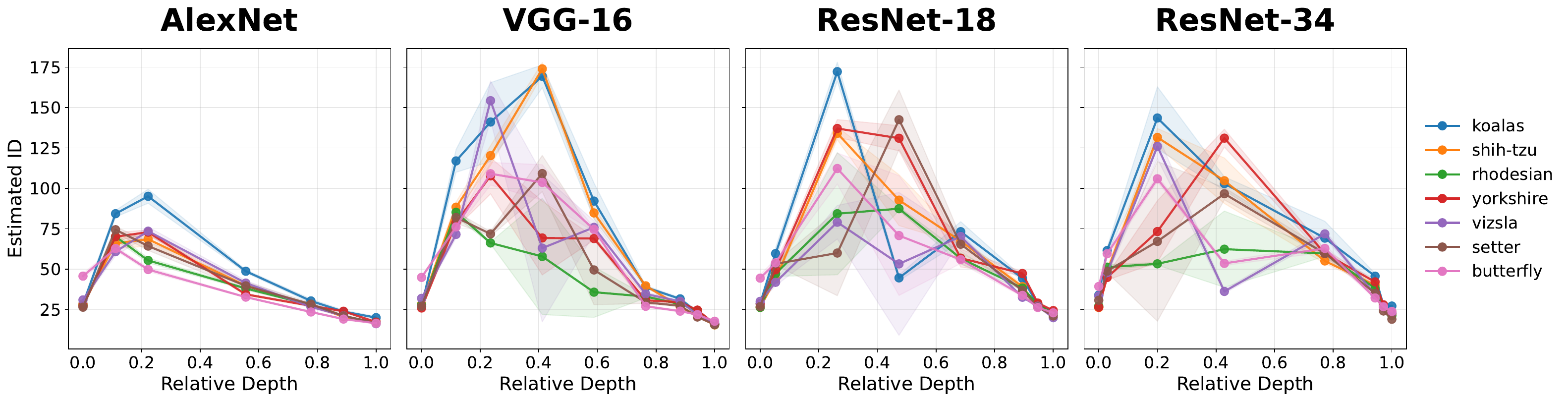}
    \caption{Class-specific estimated IDs of layer-wise representations from various pre-trained convolutional architectures separately for images with different categories (colors) from ImageNet. The x-axis shows the relative depth of model layers, and shaded areas show the estimated standard errors. The first plot is adapted from \citet{ansuini2019intrinsic}.} 
    \label{fig:id_per_class_all_models}
\end{figure}

\subsection{Average k-NN Distance Analysis}\label{app:knn_distances}

We also investigate the average NN distances of layer-wise representations for the pre-trained convolutional architectures. Given that for these types of models, usually the MLE and TwoNN estimators are used for ID estimation, we consider the consecutive NN distances that are involved in their respective calculations. Therefore, \cref{fig:nn_dist_cnns} depicts the average $1^{st}$ to $5^{th}$-NN distances $(k=1, \ldots,5)$ of layer-wise representations from different pretrained vision models. 

The layer-wise NN distances differ in both their shapes and overall magnitude between the models. The fact that average NN distances are much higher for the VGG model can, to some extent, be explained by the much larger ambient space dimensions (width of layers) for this model compared to the others. As can be seen in \cref{fig:id_vs_emb_dim}, the representations in intermediate layers of the VGG-16 model are about 400k-800k dimensional (compared to about 50k-200k dimensional intermediate-layer representations in the other models). Thus, k-NN distances grow given that points become increasingly separated in high dimensions (cf.~\cref{sec:amb_dim}). Interestingly, however, for all models the $k$-th NN distances in \cref{fig:nn_dist_cnns} grow and shrink by similar amounts over the layers. By a similar reasoning as in \cref{sec:nn_dist}, this explains the shape of the layer-wise ID patterns for the vision models (cf.~\cref{fig:id_fig}).
\begin{figure*}[!htb]
    \centering
    \includegraphics[width=1\textwidth]{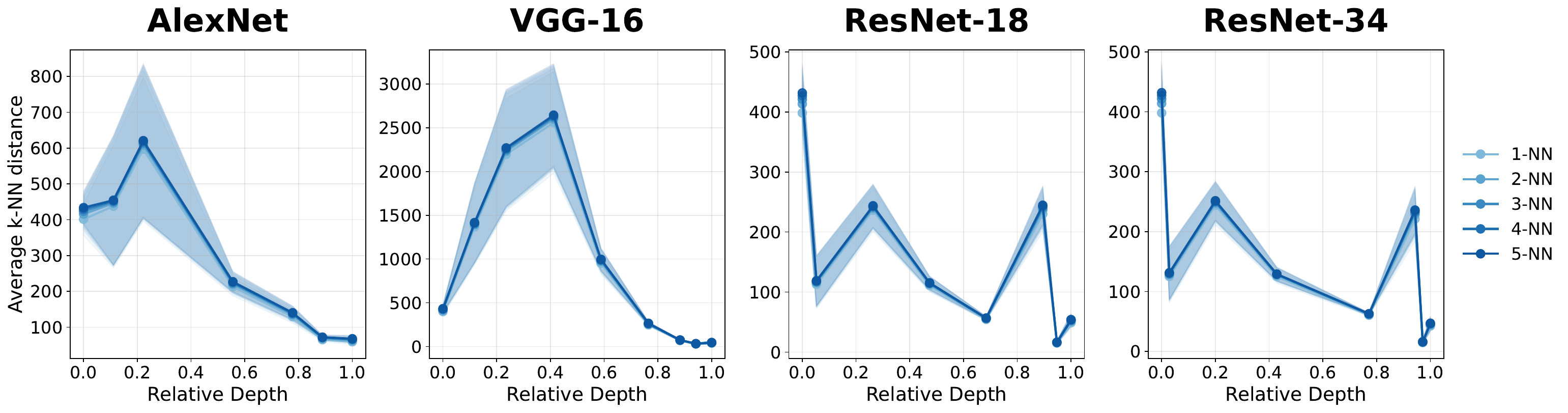}
    \caption{Average k-NN distances of representations over the layers of various pre-trained convolutional architectures. Averages over the $1^{st}$ to $5^{th}$-NN distances are depicted and displayed with a corresponding color. Shaded areas correspond to twice the standard deviation.}
    \label{fig:nn_dist_cnns}
\end{figure*}

\subsection{Cosine Similarity Analysis}\label{app:cos_sim}

\paragraph{LLMs} The extended results of the analysis of pairwise cosine similarity of layer representations from LLMs can be found in \cref{fig:avg_cos_sim_llms}. The latter includes the last layer of the three models. As described in \cref{rm:last_layer}, the last layer representations are subject to a final Layer Norm transformation, which can induce drastic changes in the last layer. The latter is likely the reason for the drop in pairwise cosine similarity in the last layer, which is especially apparent for the pythia model. However, the key insight from \cref{fig:avg_cos_sim_llms} for our analysis is that the layer-wise cosine similarity patterns are very different compared to the layer-wise ID and NN distance patterns depicted in \cref{fig:ids_llms} and \cref{fig:nn_dist_llms}, respectively. Accordingly, varying cosine similarities between representations cannot be the key driving force behind observed ID patterns, in line with the conclusion in \cref{sec:cos_sim}.
\vspace*{3mm}
\begin{figure*}[!htb]
    \centering
    \includegraphics[width=0.6\textwidth]{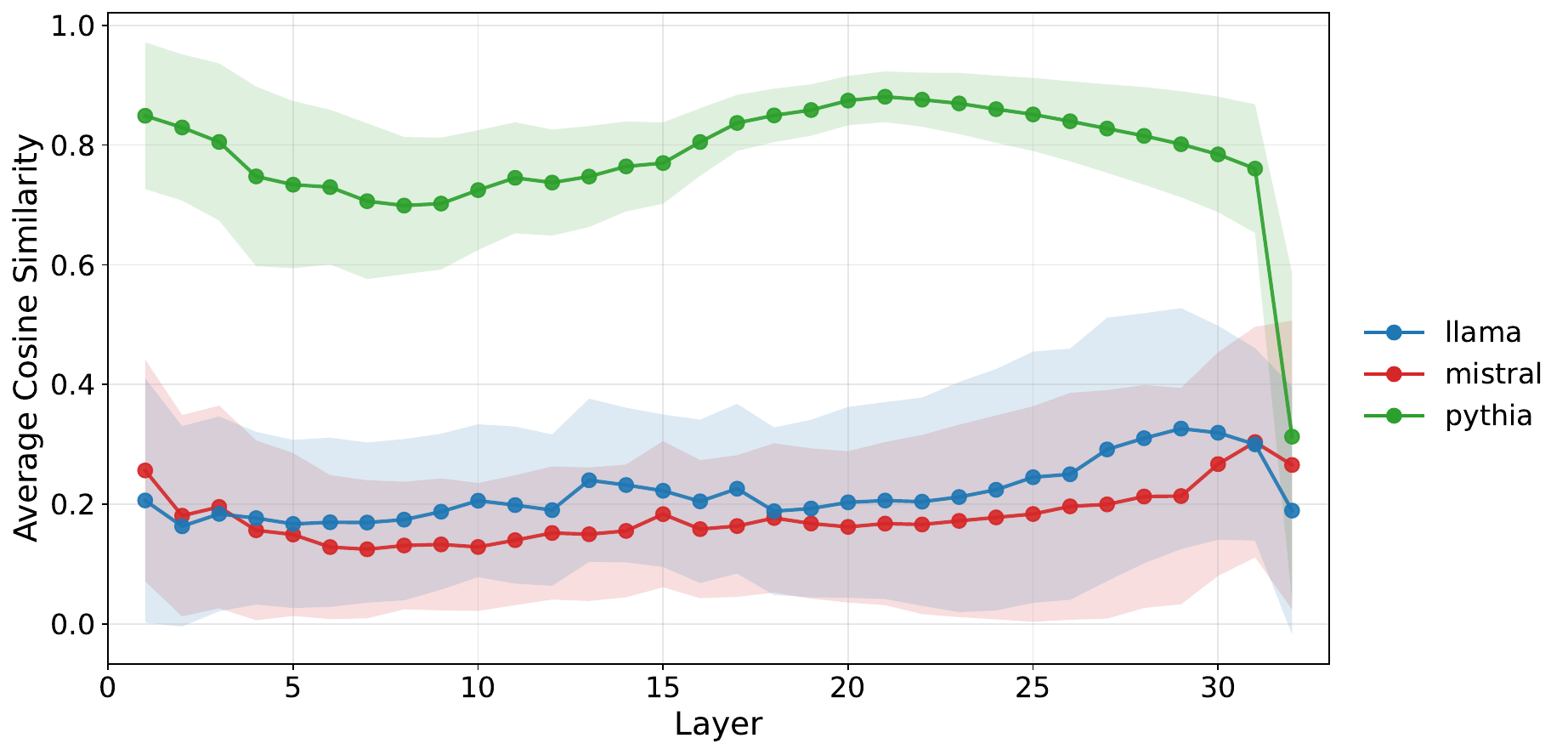}
    \caption{Average cosine similarity between layer-wise representations of the LLM models (llama, mistral, and pythia). The shaded area band represents twice the standard deviation.}
    \label{fig:avg_cos_sim_llms}
\end{figure*}
\paragraph{ViTs} We also investigate the pairwise cosine similarity of layer representations from ViTs. The results can be found in \cref{fig:avg_cos_sim_vits}. Similar to the LLM-based analysis, besides a drastic change in the last layer, pairwise cosine similarity estimates generally do not vary a lot across model layers. Moreover, layer-wise patterns seem very different from the respective layer-wise ID patterns, which are shown in \cref{fig:id_vs_entropy_vits}.
\begin{figure*}[!htb]
    \centering
    \includegraphics[width=0.65\textwidth]{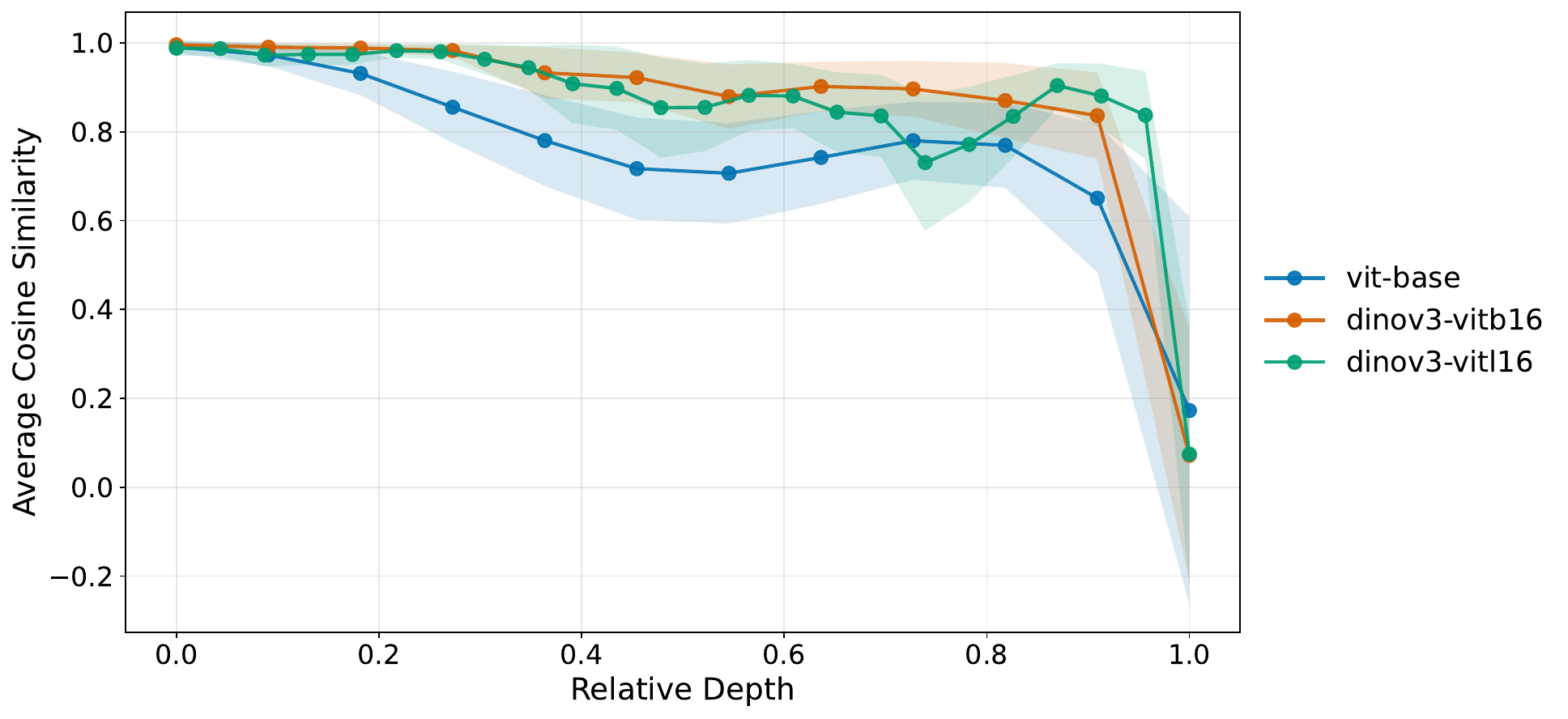}
    \caption{Average cosine similarity between layer-wise representations for different pre-trained ViTs. The shaded area band represents twice the standard deviation.}
    \label{fig:avg_cos_sim_vits}
\end{figure*}
\paragraph{CNNs} We also investigate the pairwise cosine similarity of layer representations from the CNNs. The results for the four pre-trained convolutional architectures can be found in \cref{fig:avg_cos_sim_cnns}. Interestingly, there seems to be a layer-wise pattern of pairwise cosine similarity that is common to all models, which might be of interest on its own. However, for our analysis, the most important insight of \cref{fig:avg_cos_sim_cnns} is that the layer-wise cosine similarity patterns are very different from the layer-wise NN distances and ID patterns, and therefore cannot explain the latter. This insight is in line with the results from LLM-based analysis.
\vspace*{3mm}
\begin{figure*}[!htb]
    \centering
    \includegraphics[width=0.62\textwidth]{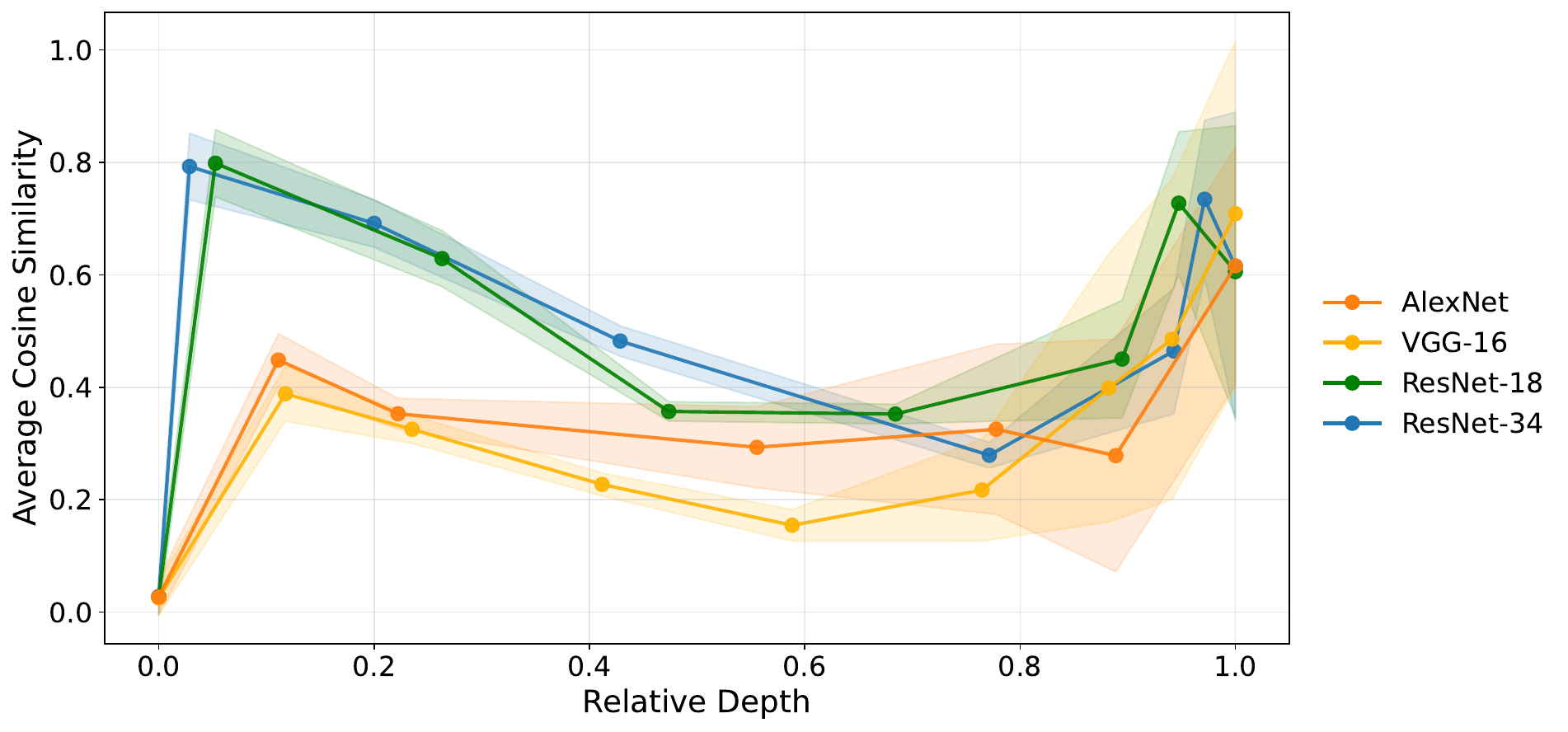}
    \vspace*{0mm}
    \caption{Average cosine similarity between layer-wise representations for different pre-trained convolutional architectures. The shaded area band represents twice the standard deviation.}
    \label{fig:avg_cos_sim_cnns}
\end{figure*}
\subsection{Representation Length Analysis}\label{app:rep_length}
\paragraph{LLMs} The results of the analysis of size-based analysis of layer representations from LLMs can be found in \cref{fig:nn_and_l2_llms}. The left plot depicts the average lengths of layer-wise representations from the three LLM models. The length of each representation is measured by the usual $L_2$ distance to the origin of the latent space. Given that these distances tend to increase over the hidden layers of all models (besides a constant dimension of the layer-wise latent space), this corresponds to an
expansion of representation in latent space. This phenomenon was also described in \cref{sec:size_of_reps}. A notable exception is the last layer. While the size of representations drastically increases in the case of the llama and mistral model, it drastically decreases for the pythia model. While these model-specific differences might be of interest on their own, they are of minor importance for our analysis. 

For our purposes, the key insight from \cref{fig:nn_and_l2_llms} is that the layer-wise $L_2$ lengths of representations (right plot) very closely match the layer-wise NN distances (left plot). Therefore, $L_2$ lengths of representations and the expansion in latent space as very likely to be the driver behind the observed NN distance. As the latter are used by ID estimators, they give rise to observed layer-wise ID patterns. 

The left plot in \cref{fig:nn_and_l2_llms} corresponds to an extension of \cref{fig:nn_dist_llms}. The former also includes the NN distances for last-layer representations, which are omitted in \cref{fig:nn_dist_llms}. As described above, the drastic change in NN distances in the last layer is induced by the drastic change in the length of representations in the last layer. As this phenomenon is not central to our discussion and makes it very hard to read of differences between NN distances, we omitted the last layer in \cref{fig:nn_dist_llms}.
\vspace{2mm}
\begin{figure*}[!htb]
    \centering
    \includegraphics[width=1\textwidth]{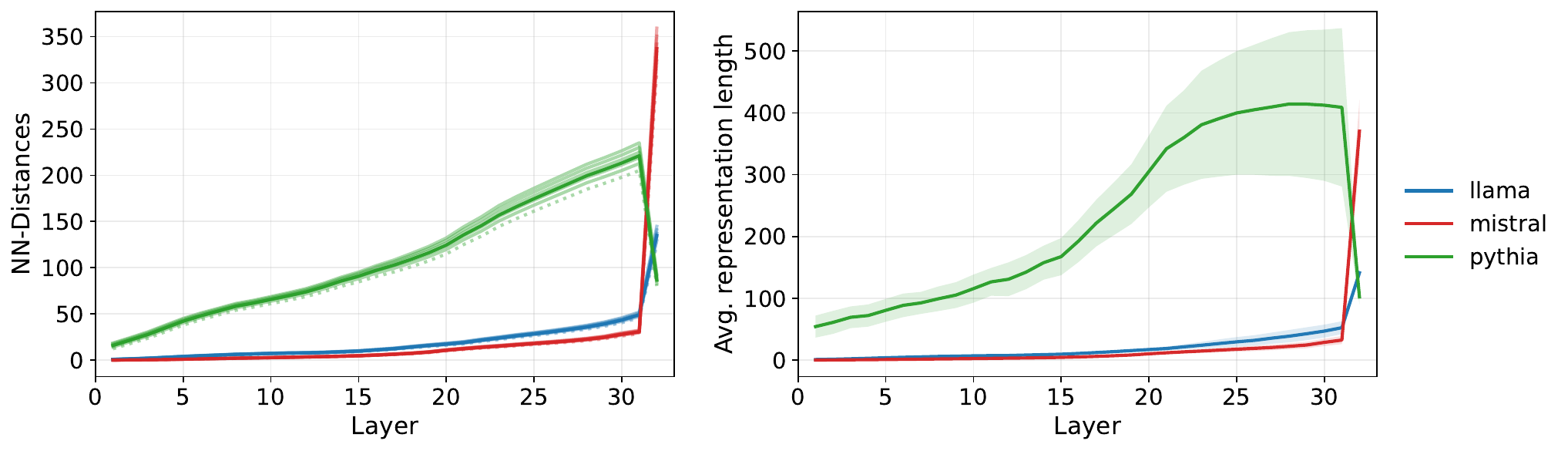}
    \caption{NN distances (left) and average length (measured by $L_2$-distances to the origin) of layer-wise representations for the llama, mistral, and pythia models. The left plot corresponds to \cref{fig:nn_dist_llms}, but additionally includes the last layer. The shaded area band in the right plot represents twice the standard deviation.}
    \label{fig:nn_and_l2_llms}
\end{figure*}
\paragraph{ViTs} The results for the length-based analysis for layer-wise representations of ViTs are depicted in \cref{fig:nn_and_l2_vits}. Analogous to the LLM-based results, the layer-wise patterns of $L_2$ length of representations generally grow over the hidden layers and closely resemble the layer-wise NN distance patterns.
\begin{figure*}[!htb]
    \centering
    \includegraphics[width=1\textwidth]{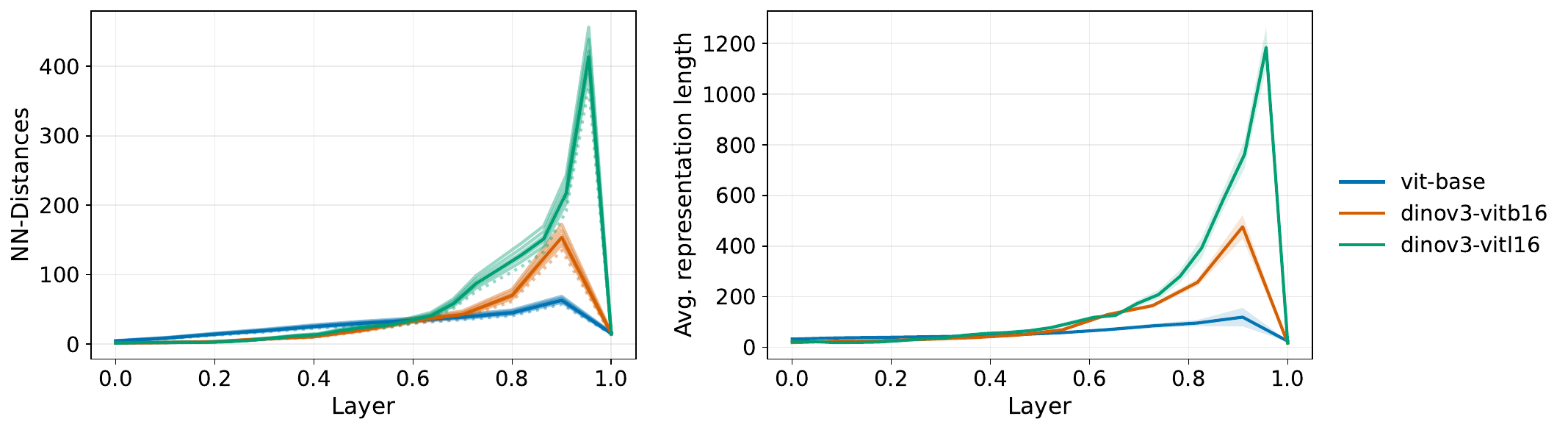}
    \caption{NN distances (left) and average length measured by $L_2$-distances to the origin (right) of layer-wise representations for different ViTs. In the left plot, for each model, each line (top to bottom) corresponds to the averages of $64^{th}$ \& $32^{nd}, \ldots, 2^{nd}$ \& $1^{st}$ NN distances. Solid lines denote the average over all 6 lines, with the average of the first two NN distances (used in the TwoNN estimator) highlighted as a dotted line. The shaded area band in the right plot represents twice the standard deviation.}
    \label{fig:nn_and_l2_vits}
\end{figure*}

\paragraph{CNNs} The results for the length-based analysis for layer-wise representations of the CNNs are depicted in \cref{fig:avg_length_reps_cnns}. Analogous to the other models, the layer-wise patterns of $L_2$ length of representations closely resemble the NN distance patterns found in \cref{fig:nn_dist_cnns} and are therefore likely the driving force behind the NN distances. However, in contrast to the LLM-based results, the lengths of representations do not exhibit an increasing but rather a decreasing trend over the layers, and therefore no expansion in latent space. It should be noted, however, that the dimension of the layer-wise ambient space of representations is not constant but mostly decreasing over the layers of CNNs (cf.~\cref{fig:id_vs_emb_dim}). Hence, a decrease in layer-wise representation lengths is expected for CNNs.

\begin{figure*}[!htb]
    \centering
    \includegraphics[width=1\textwidth]{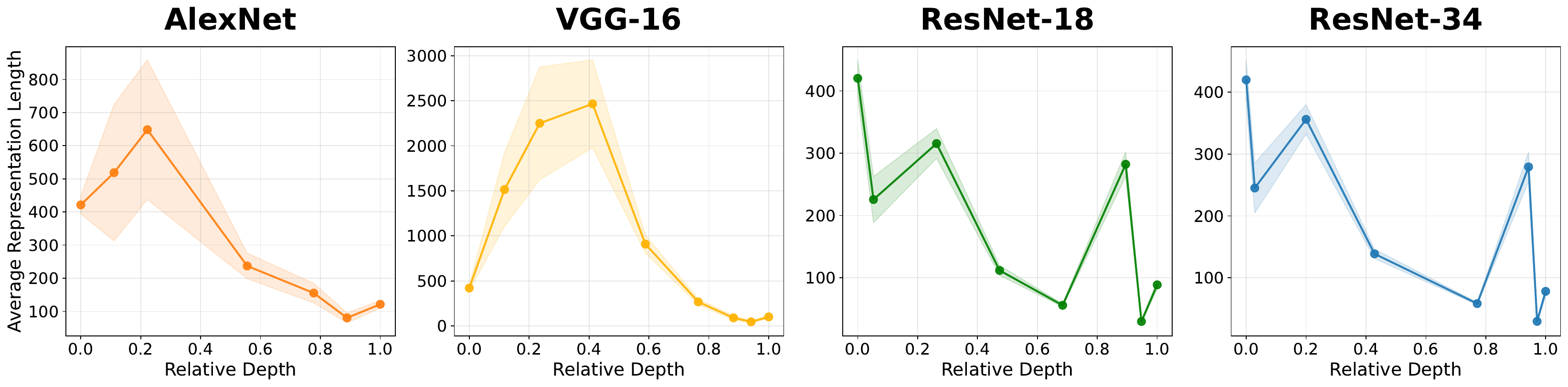}
    \caption{Average length (measured by $L_2$-distances to the origin) of layer-wise representations for different CNNs. The shaded area band represents twice the standard deviation.}
    \label{fig:avg_length_reps_cnns}
\end{figure*}
\subsection{Entropy vs.\ ID Analysis}\label{app:entropy_vs_id}
\paragraph{ViTs and CNNs} We also extended the analysis comparing layer-wise ID and entropy estimates in \cref{sec:other_metrics} from LLMs to ViTs and CNNs. The results for the ViT and CNN models are depicted in \cref{fig:id_vs_entropy_vits} and \cref{fig:id_vs_entropy}, respectively. Analogously to the findings obtained in the LLM-based analysis (cf.~\cref{fig:id_vs_entropy_llms}), we find a strong connection between the layer-wise patterns of estimated IDs and von Neumann entropy also for these models..
\begin{figure*}[!htb]
    \centering
    \includegraphics[width=0.65\textwidth]{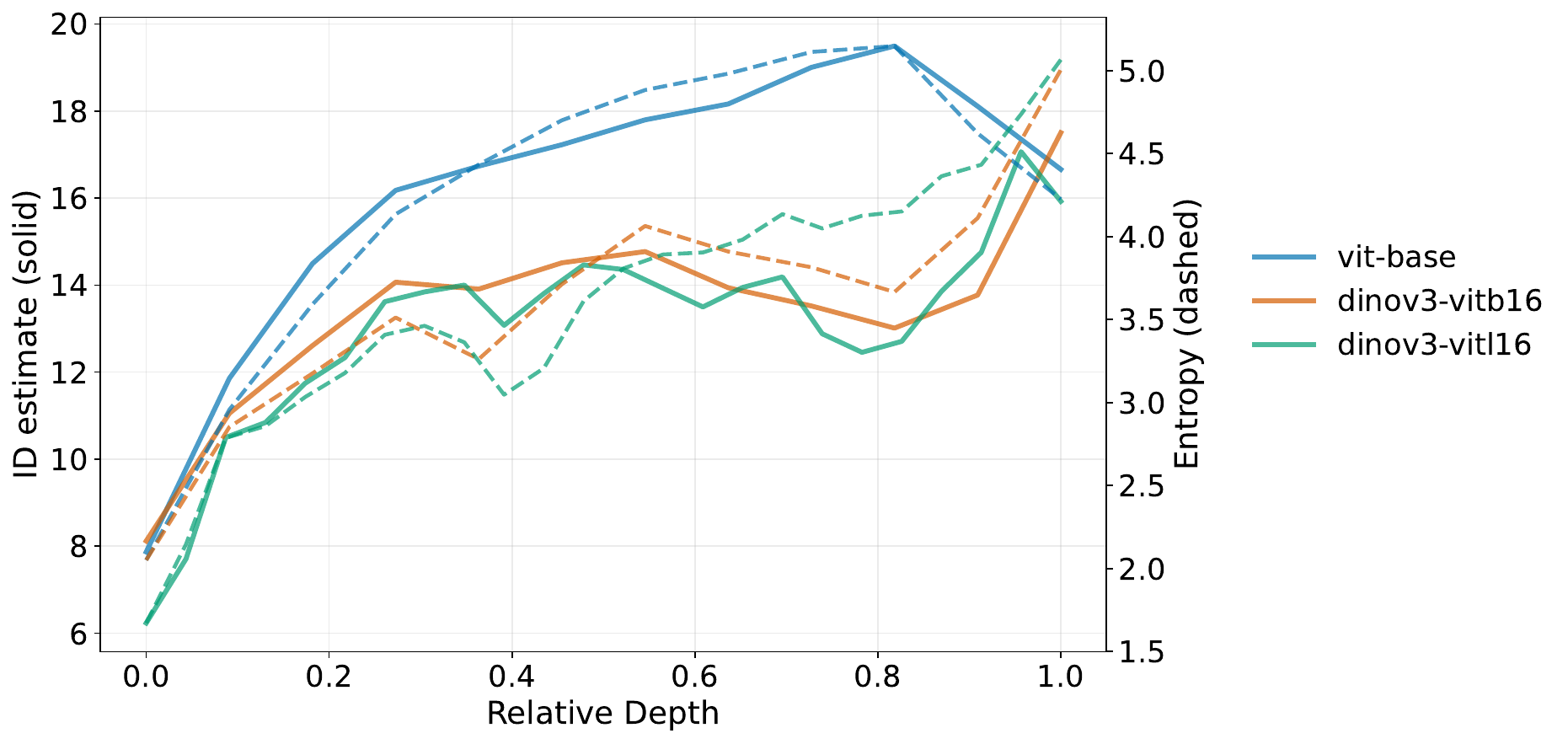}
    \caption{Layer-wise comparison of estimated intrinsic dimensions (left y-axis) vs.\ von Neumann entropy (right y-axis) of neural representations from different ViTs. 
    Details about the entropy metric can be found in \cref{app:metrics}.
    }
    \label{fig:id_vs_entropy_vits}
\end{figure*}
\begin{figure*}[!htb]
    \centering
    \includegraphics[width=1\textwidth]{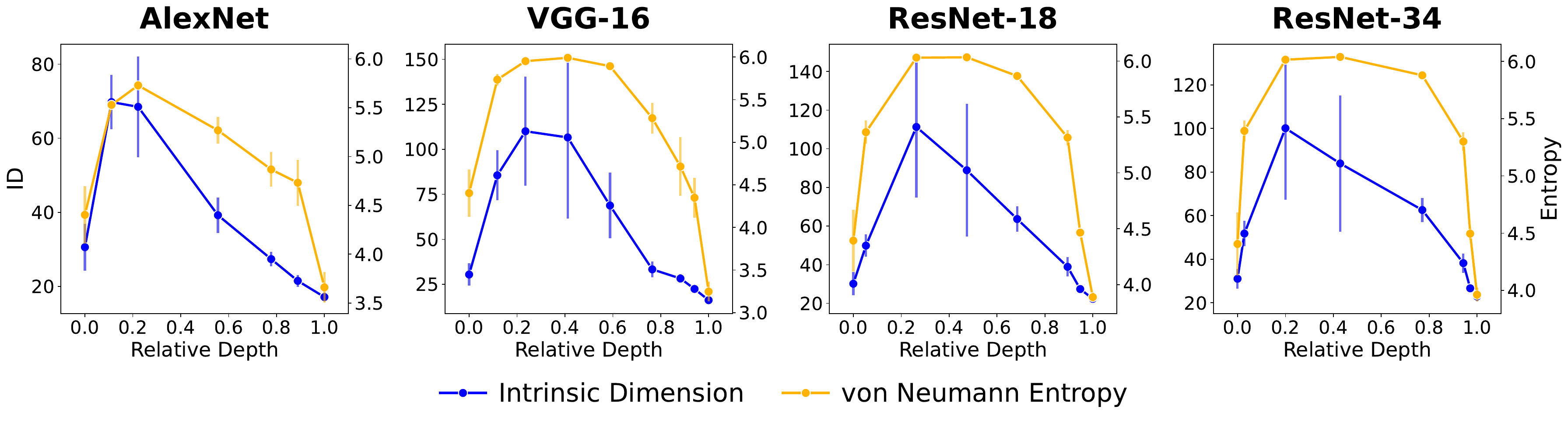}
    \caption{Layer-wise comparison of estimated intrinsic dimensions (left y-axis) vs.\ von Neumann entropy (right y-axis) of neural representations from different CNNs. 
    Details about the entropy metric can be found in \cref{app:metrics}.
    }
    \label{fig:id_vs_entropy}
\end{figure*}
\end{document}